\def\1{\bm{1}}
\def\eps{{\epsilon}}
\DeclareMathAlphabet{\mathsfit}{\encodingdefault}{\sfdefault}{m}{sl}
\SetMathAlphabet{\mathsfit}{bold}{\encodingdefault}{\sfdefault}{bx}{n}
\DeclareMathOperator*{\argmin}{arg\,min}
\def\disc{\text{disc}}
\def\hf{\widehat{f}}
\def\tf{\widetilde{f}}
  \setlist{leftmargin=*}
\renewcommand{\thefootnote}{\fnsymbol{footnote}}
\title{Domain Adaptation meets Individual Fairness.\\ And they get along.}
\author{Debarghya Mukherjee${}^{*\,1}$, Felix Petersen${}^{*\,2}$, Mikhail Yurochkin$^{3}$ and Yuekai Sun$^{1}$ \\
$^{1}$Department of Statistics, University of Michigan \\
$^2$University of Konstanz \\
$^3$IBM Research, MIT-IBM Watson AI Lab
}
\author{
  ~Debarghya Mukherjee${}^*$ \\
  Princeton University\\
  University of Michigan\\
  ~\texttt{~~~~~mdeb@umich.edu~~~~~}~\\
  \And
  ~Felix Petersen${}^*$ \\
  Stanford University\\
  University of Konstanz\\
  \texttt{~mail@felix-petersen.de~}\\
  \AND
  Mikhail Yurochkin \\
  \kern-2em IBM Research, MIT-IBM Watson AI Lab \kern-2em\\
  \texttt{mikhail.yurochkin@ibm.com}\\
  \And
  Yuekai Sun \\
  University of Michigan\\
  ~\texttt{~~~~yuekai@umich.edu~~~~}~\\
}
\newcommand{\citet}[1]{{\color{cyan}\textbf{CITET:} #1}}
\newcommand{\citep}[1]{{\color{cyan}\textbf{CITEP:} #1}}
\begin{document}
\maketitle
\renewcommand*{\thefootnote}{\fnsymbol{footnote}}
\footnotetext[1]{Equal Contribution.}
\renewcommand*{\thefootnote}{\arabic{footnote}}
\setcounter{footnote}{0}

\begin{abstract}
Many instances of algorithmic bias are caused by distributional shifts. For example, machine learning (ML) models often perform worse on demographic groups that are underrepresented in the training data. In this paper, we leverage this connection between algorithmic fairness and distribution shifts to show that algorithmic fairness interventions can help ML models overcome distribution shifts, and that domain adaptation methods (for overcoming distribution shifts) can mitigate algorithmic biases. In particular, we show that (i) enforcing suitable notions of individual fairness (IF) can improve the out-of-distribution accuracy of ML models under the covariate shift assumption and that (ii) it is possible to adapt representation alignment methods for domain adaptation to enforce individual fairness. The former is unexpected because IF interventions were not developed with distribution shifts in mind. The latter is also unexpected because representation alignment is not a common approach in the individual fairness literature.
\end{abstract}

\section{Introduction}
\label{sec:intro}
Although algorithmic bias and distribution shifts are often considered separate problems, there is a recent body of empirical work that shows many instances of algorithmic bias are caused by distribution shifts. Broadly speaking, there are two ways distribution shifts cause algorithmic biases~\cite{obermeyer2021algorithmic}: (i) The model is trained to predict the wrong target; (ii) The model is trained to predict the correct target, but its predictions are inaccurate for demographic groups that are underrepresented in the training data.

From a statistical perspective, the first type of algorithmic bias is caused by \emph{concept} or \emph{posterior drift} between the training data and the real-world.
This leads to a mismatch between the model's predictions and actual data. 
This type of algorithmic bias is also known as \emph{label choice bias}~\cite{obermeyer2019dissecting}.
The second type of algorithmic biases arises when ML models are trained or evaluated with non-diverse data, causing the models to perform poorly on underserved groups. This type of algorithmic bias is caused by a \emph{covariate shift} between the training data and the real-world data. In this paper, we mostly focus on algorithmic biases caused by covariate shift.
The overlap between the problems of algorithmic bias and distribution shift suggests two questions:
\begin{enumerate}
    \item Is it possible to overcome distribution shifts with algorithmic fairness interventions?
    \item Is it possible to mitigate biases caused by distribution shifts with domain adaptation methods?
\end{enumerate}
For a concrete example, consider building an ML model to predict a person's occupation from their biography.
For this task, Yurochkin~\textit{et al.}~\cite{yurochkin2021sensei} showed that ML models trained on top of pre-trained language models without any algorithmic fairness intervention can be unfair:
they can change prediction (e.g., from attorney to paralegal or vice versa), when the name and gender pronouns are changed in the input biography.
This is a violation of individual fairness (IF), in part caused by underrepresentation of female attorneys in the train (source) data.
Consequently, this model underperforms on female attorneys, in particular when female attorneys are better represented in the target domain.
This is a type of distribution shift known as subpopulation shift in the domain adaptation literature~\cite{koh2021wilds}.
In this case, enforcing IF will not only result in a fairer model, but can also improve \emph{performance} in the target domain, i.e., solve the domain adaptation problem.

Now, under the same source and target domains, consider applying a domain adaptation (DA) method that matches the distributions of representations on the domains (see Appendix \ref{sec:domain-adaptation-background} for a brief review of DA and algorithmic fairness under distribution shifts).
Assuming class marginals are the same\footnote{This setting corresponds to a domain shift assumption common in the DA literature.}, i.e., source and target have the same fraction of attorneys, any differences between the source and the target distribution are due to different fractions of male to female attorneys.
Learning a feature (representation) extractor that is invariant to gender pronouns and names will align the two domains and result in a model that is individually fair.
For group fairness, Schumann~\textit{et al.}~\cite{schumann2019transfer} and Creager~\textit{et al.}~\cite{creager2021environment} show that it is possible to leverage DA algorithms to enforce group fairness.
The goal of this paper is to complement these results by precisely characterizing the cases in which enforcing IF achieves domain generalization and vice a versa.
Our contributions can be summarized as:
\begin{enumerate}
    \item We show that methods designed for IF can help ML models adapt/generalize to new domains, i.e., improve the accuracy of the trained ML model on out-of-distribution samples.
    \item Conversely, we show that DA algorithms that align the feature distributions in the source and target domains can be used to improve IF under certain probabilistic conditions on the features.
\end{enumerate}
We verify our theory on the Bios~\cite{de2019bias} and the Toxicity~\cite{dixon2018measuring} datasets: enforcing IF via the methods of Yurochkin~\textit{et al.}~\cite{yurochkin2021sensei} and Petersen~\textit{et al.}~\cite{petersen2021post} improves accuracy on the target domain, and DA methods~\cite{ganin2016domain,shu2018dirtt,shen2018wasserstein} trained with appropriate source and target domains improve IF.

\section[Overcoming Distribution Shift by Enforcing Individual Fairness]{Overcoming Distribution Shift by Enforcing Individual Fairness}
\label{sec:if_da}

The goal of individual fairness is to ensure similar treatment of similar individuals.
Dwork~\textit{et al.}~\cite{dwork2012fairness} formalize this notion using $L$-Lipschitz continuity of an ML model $f:\cX\to\cY$:
\begin{equation}
d_{\cY}(f(x),f(x')) \le Ld_{\cX}(x,x')
\label{eq:individual-fairness}
\end{equation}
for all $x,x'\in\cX$.
Here, $d_{\cY}$ is the metric on the output space quantifying the similarity of treatment of individuals, and $d_{\cX}$ is the metric on the input space quantifying the similarity of individuals.

Algorithms for enforcing IF are similar to algorithms for domain adaptation/generalization.
For example, adversarial training/distributionally robust optimization can not only enforce IF~\cite{yurochkin2020training,yurochkin2021sensei}, but can also be used for training ML models that are robust to distribution shifts~\cite{shu2018dirtt,sagawa2019distributionally}.
This similarity is more than a mere coincidence: the goal in both enforcing IF and domain adaptation/generalization is \emph{ignoring uninformative dissimilarity}.
In IF, we wish to ignore variation among inputs that are attributed to variation of the sensitive attribute.
In domain adaptation/generalization, we wish to ignore variation among inputs that are attributed to the idiosyncracies of the domains.
Mathematically, ignoring uninformative dissimilarity is enforcing invariance/smoothness of the ML model among inputs that are dissimilar in uninformative ways.
For example,~\eqref{eq:individual-fairness} requires the model to be approximately constant on small $d_{\cX}$-balls.

In this section, we exploit this connection between IF and domain adaptation/generalization to show that enforcing IF can improve accuracy in target domain under covariate shift \emph{if the regression function is individually fair}. 
In order words, if the inductive bias from enforcing IF is correct, then enforcing IF improves accuracy in the target domain.
More concretely, we consider the task of adapting an ML model from a source domain to a target domain.
We have $n_s$ labeled samples from the source domain $\{(x_{s,i}, y_{s,i})\}_{i=1}^{n_s}$ and $n_t$ unlabeled samples from the target domain $\{x_{t,i}\}_{i=1}^{n_t}$. Our goal is to obtain a model $\hf\in\cF$ that has comparable accuracy on the source and target domains.
We assume the \textbf{regression function} $f_0(x) \triangleq \Ex\big[y_i\mid x_i = x\big]$ in the source and target domains are identical.
\begin{equation}
y_{e,i} = f_0(x_{e,i}) + \eps_{e,i},\quad e\in\{s,t\},
\label{eq:covariate-shift}
\end{equation}
where $\eps_i$'s are exogenous error terms with mean zero and variance $\sigma_e^2$.
This is a special case of distribution shift called \textbf{covariate shift}~\cite{shimodaira2000improving}. The covariate shift problem is most challenging when the model class is \emph{mis-specified} (i.e., $f_0\notin\cF$) and this is the primary focus of this paper. As an example, consider the Inclusive Images Challenge \cite{shankar2017no}.
Publicly available image datasets often lack geo-diversity. Thus, ML models trained on such datasets tend to make mistakes on images from underrepresented countries. As a concrete example, while brides in western countries typically wear white dresses at wedding ceremonies, brides in non-western countries may not. An ML model trained on images from mostly western countries may not recognize brides from other parts of the world that are not wearing white dresses. Although there is a function (on images) that recognizes brides from non-western countries (e.g., the function humans implicitly use to recognize brides), the ML model does not learn this function because either the function is not in the model class and/or the inductive bias of the learning algorithm leads the algorithm to pick a different function (i.e., inductive bias of learning algorithm is mis-specified).

To warm up, we consider the transductive (learning) setting before moving on to the inductive setting. Recall that, in the transductive setting, the learner is given a set of labeled samples and another set of unlabeled samples. The goal is correctly predicting the labels of the given unlabeled samples; the learner is unconcerned with the accuracy of the model on new test samples. This is different from the inductive setting, where the goal is correctly predicting the labels of new test samples. The features of the unlabeled samples (but not their labels) are used for training in both settings. We provide theoretical results for both settings.

\subsection{Warm Up: The Transductive Setting}
\label{sec:transductive}

In the transductive setting, we are only concerned with the accuracy of the predictions on the unlabeled samples from the target domain in the training data. The distribution of unlabeled samples is different from the (marginal) distribution of features in the source domain due to covariate shift. 
Thus, the problem is similar to that of extrapolation/label propagation in which we wish to propagate the labels/signal from the labeled samples in the source domain to the unlabeled samples in the target domain. 
Towards this goal, we leverage the (labeled) source and (unlabeled) target samples and the inductive bias on the smoothness of the regression function.
We encode this inductive bias in a regularizer~$\cR$ and solve the following regularized risk minimization problem
\begin{equation}
\textstyle
\label{eq:hat_f_reg}
 \hat f = \argmin_{f \in \cF} \left[\frac{1}{n_s}\sum_{i=1}^{n_s}\cL\left(y_i, f(x_i)\right) + \lambda \cR_n\left(f(X)\right)\right]
\end{equation}
where $\cF$ is the model class, $\cL$ is a loss function, and $\lambda > 0$ is a regularization parameter.
In the transductive setting, the regularizer $\cR_n$ is a function of the vector of model outputs on the source and target inputs:
$f(X) \triangleq \begin{bmatrix}f(X_s)^\top, f(X_t)^\top\end{bmatrix}^\top$, where $f(X_s)\in\reals^{n_s}$ (resp.~$f(X_t)\in\reals^{n_t}$) is the vector of outputs on the source (resp.~target) inputs.
Intuitively, the regularizer enforces invariance/smoothness of the model outputs on the source and target inputs.

A concrete example of a such a regularizer is the \textbf{graph Laplacian regularizer}.
A graph Laplacian regularizer is based on a similarity symmetric kernel $K$ on the input space $\cX$.
For example, Petersen~\textit{et al.}~\cite{petersen2021post} take kernel $K$ to be a decreasing function of a fair metric that is learned from data~\cite{mukherjee2020Two}, e.g., a metric in which the distance between male and female biographies with similar relevant content is small.
In domain adaptation, a similar intuition can be applied.
For example, suppose the source train data consists of Poodle dogs and Persian cats (the task is to distinguish cats and dogs), and the target data consists of Dalmatians and Siamese cats~\cite{santurkar2020breeds}.
Then, a meaningful metric for constructing kernel $K$ assigns small distances to different breeds of the same  species.

Given the kernel, we construct the similarity matrix $\bK = \begin{bmatrix}K\left(X_i, X_j\right)\end{bmatrix}_{i,j= 1}^n$. Note that, here, we are considering all the source and target covariates together. Based on the similarity matrix, the (unnormalized) Laplacian matrix is defined as $\bL = \bD - \bK$
where $\bD$ is a diagonal matrix with $\bD_{i, i} = \sum_j K(X_i, X_j)$, which is often denoted as the degree of the $i^{th}$ observation. There are also other ways of defining $\bL$ (e.g., $\bL = \bD^{-1/2}\bK\bD^{-1/2}$ or $\bL = \bI - \bD^{-1}\bK$) which would also lead to the similar conclusion, but we stick to unnormalized Laplacian for the ease of exposition.
Based on the Laplacian matrix $\bL$, we define  the graph Laplacian regularizer $\cR$ as:
$$
\textstyle
\cR_n(f(X)) = \frac{1}{n^2}f(X)^{\top}\bL f(X) = \frac{1}{n^2} \sum_{i, j}K(X_i, X_j)\left(f(X_i) - f(X_j)\right)^2.
$$
The above regularizer enforces that if $K(X_i, X_j)$ is large for a pair $(X_i, X_j)$ (i.e., they are similar), $f(X_i)$ must be close to $f(X_j)$.
As mentioned earlier, for individual fairness, $K(X_i, X_j)$ is chosen to be a monotonically decreasing function of $d_{\text{fair}}(X_i, X_j)$, which ensures that $f(X_i)$ and $f(X_j)$ are close to each other when $X_i$ is close to $X_j$ with respect to the fair metric (for more details, see Petersen~\textit{et al.}~\cite{petersen2021post}).
Recently, Lahoti~\textit{et al.}~\cite{lahoti2019ifair}, Kang~\textit{et al.}~\cite{kang2020inform}, and Petersen~\textit{et al.}~\cite{petersen2021post} used the graph Laplacian regularizer to post-process ML models so that they are individually fair.
This is also widely used in semi-supervised learning to leverage unlabeled samples~\cite{chapelle2006Semisupervised}.

We focus on problems in which the model class $\cF$ is mis-specified, i.e., $f_0\notin\cF$.
If the model is well-specified (i.e., $f_0\in\cF$), the optimal prediction rule in the training and target domains are identical (both are $f_0$).
It is possible to learn the optimal prediction rule for the target domain from the training domain (e.g., by empirical risk minimization (ERM)), and there is no need to adapt models trained in the source domain to the target.
On the other hand, if the model is mis-specified, the transfer learning task is non-trivial because the optimal prediction rule model depends on the distribution of the inputs (which differ in training and target domains).
Here, we focus on the non-trivial case. %
We show that, as long as $f_0$ satisfies the smoothness structure enforced by the regularizer, $\hf$ from~\eqref{eq:hat_f_reg} remains accurate at the target inputs $\{x_{t,i}\}_{i=1}^{n_t}$.
First, we state our assumptions on the loss function $\cL$ and the regularizer $\cR_n$.

\begin{assumption}
\label{asm:regression-function-smooth}
We assume that the regression function is smooth with respect to the penalty $\cR_n$, i.e., $\cR_n(f_0(X)) \le \delta$ for some small $\delta > 0$.
\end{assumption}

This is an assumption on the effect of the smoothness structure enforced by the regularizer being in agreement with the regression function $f_0$.

\begin{assumption}
\label{asm:smoothness_penalty}
We assume that $\cR$ is $\frac{\mu_{\cR_n}}{n_t}$-strongly convex with respect to the model outputs on the target inputs and $\frac{L_{\cR_n}}{n}$-strongly smooth.
More specifically, for $v_1 \in \reals^{n_s}, v_2, v \in \reals^{n_t}, \tilde v, v_0 \in \reals^{n}$
\begin{align*}
\cR_n\left(v_1, v_2\right) & \ge \cR_n\left(v_1, v\right) + \left \langle  v_2 - v,
\partial_t \cR_n\left(v_1, v\right)\right\rangle + \frac{\mu_{\cR_n}}{2n_t}\left\|v_2 - v\right\|_2^2.  \\
\textstyle\cR_n\left(v_1, v_2\right) & \le \cR_n\left(v, \tilde v\right) + \left \langle \begin{pmatrix} v_1 - v \\ v_2 - \tilde v\end{pmatrix},
\partial \cR_n\left(v, \tilde v\right)\right\rangle + \frac{L_{\cR_n
}}{2 n}\left\|\begin{bmatrix} v_1 - v \\ v_2 - \tilde v\end{bmatrix}\right\|_2^2.
\end{align*}
\end{assumption}

This is a regularity assumption on the regularizer to ensure the \textbf{extrapolation map} $y_t:\reals^{n_s}\to\reals^{n_t}$
\begin{equation}
\textstyle
y_t^*(v) \triangleq \argmin_{t \in \reals^{n_t}}\cR_n(v, t)
\label{eq:extrapolation-map}
\end{equation}
is well-behaved.
Intuitively, the extrapolation map extrapolates (hence its name) model outputs on the source domain to the target domain \emph{in the smoothest possible way}.
Next, we state our assumptions on the loss function:

\begin{assumption}
\label{asm:gen_loss}
The loss function $\cL: \reals \times \reals \to \reals_+$ satisfies $\cL(a, b) \ge 0$ and $= 0$ if and only if $a = b$.
Furthermore, it is $\mu_\cL$ - strongly convex and $L_\cL$ - strongly smooth, i.e.,
\begin{align*}
    \cL(x, y) &\textstyle \ge \cL(x_0, y_0) + \left \langle (x,y) - (x_0, y_0), \partial \cL(x_0, y_0)\right \rangle + \frac{\mu_\cL}{2}\left\|(x,y) - (x_0, y_0) \right\|_2^2 .\\
    \cL(x, y) &\textstyle \le \cL(x_0, y_0) + \left \langle (x,y) - (x_0, y_0), \partial \cL(x_0, y_0)\right \rangle + \frac{L_\cL}{2}\left\|(x,y) - (x_0, y_0) \right\|_2^2.
\end{align*}
\end{assumption}
Assumption~\ref{asm:gen_loss} is standard in learning theory, which provides us control over the curvature of the loss function.

\begin{theorem}
\label{thm:gen_loss_reg}
Suppose $\hat f$ is the estimated function obtained from~\eqref{eq:hat_f_reg}.
Under Assumption~\ref{asm:gen_loss} on the loss function and Assumptions~\ref{asm:regression-function-smooth} and~\ref{asm:smoothness_penalty} on the regularizer, we have the following bound on the risk in the target domain:
\begin{equation}
\begin{aligned}\textstyle
\frac{1}{n_t}\sum_{i=1}^{n_t}\cL\left(\hat f(x_{t, i}), f_0(x_{t, i})\right) &\textstyle \le \alpha_n \!\! \left[\!\frac{1}{n_s}\!\!\sum_{i=1}^{n_s} \cL\left(\hat f(x_{s, i}), f_0(x_{s, i})\right) {+} \lambda \cR_n(\hat f(X))\right] 
+ \beta_n \cR_n(f_0(X)) \,.
\end{aligned}
\label{eq:extrapolation-error-bound}
\end{equation}
where 
\begin{align}
\textstyle
    \alpha_n & = \max\left\{\frac{L_\cL L_\cR^2\left(\mu_\cL + 3L_\cL \right)}{2\mu_\cR^2 \mu_\cL}\rho_n, \  \frac{2+L_\cL}{\lambda \mu_\cR}(1+\rho_n)\right\}\,, \ \ \ \ \beta_n = \frac{2 + L_\cL + L^2_\cL}{\mu_\cR}(1+\rho_n) \,.
\end{align}
with $\rho_n = n_s/n_t$. 
\end{theorem}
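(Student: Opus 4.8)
The plan is to reduce the target risk on the left of \eqref{eq:extrapolation-error-bound} to the squared Euclidean distance $\|\hat f(X_t)-f_0(X_t)\|_2^2$, and then to control this distance by routing both $\hat f$ and $f_0$ through the extrapolation map $y_t^*$ of \eqref{eq:extrapolation-map}. The reduction uses only Assumption~\ref{asm:gen_loss}: since $\cL\ge 0$ with $\cL(c,c)=0$, each of the sections $a\mapsto\cL(a,c)$ and $b\mapsto\cL(c,b)$ is minimized on the diagonal, so $\partial\cL(c,c)=0$ for every $c$. Plugging $(x_0,y_0)=(f_0(x_{t,i}),f_0(x_{t,i}))$ into the strong-smoothness inequality gives $\cL(\hat f(x_{t,i}),f_0(x_{t,i}))\le \tfrac{L_\cL}{2}\big(\hat f(x_{t,i})-f_0(x_{t,i})\big)^2$, hence the left-hand side of \eqref{eq:extrapolation-error-bound} is at most $\tfrac{L_\cL}{2n_t}\|\hat f(X_t)-f_0(X_t)\|_2^2$; symmetrically, the strong-convexity inequality gives $\tfrac{\mu_\cL}{2}\big(\hat f(x_{s,i})-f_0(x_{s,i})\big)^2\le \cL(\hat f(x_{s,i}),f_0(x_{s,i}))$, so $\|\hat f(X_s)-f_0(X_s)\|_2^2\le \tfrac{2n_s}{\mu_\cL}\,\widehat R_s$ with $\widehat R_s\triangleq\tfrac1{n_s}\sum_i\cL(\hat f(x_{s,i}),f_0(x_{s,i}))$.

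Next I would split
\[
\hat f(X_t)-f_0(X_t)=\big(\hat f(X_t)-y_t^*(\hat f(X_s))\big)+\big(y_t^*(\hat f(X_s))-y_t^*(f_0(X_s))\big)+\big(y_t^*(f_0(X_s))-f_0(X_t)\big),
\]
the three summands measuring, respectively, how far $\hat f$ is from its own smoothest extrapolation, how much the extrapolation map moves under the source-side error, and how far $f_0$ is from its own smoothest extrapolation. For the first and third, use that $y_t^*(v)$ is the unconstrained minimizer of $t\mapsto\cR_n(v,t)$, so $\partial_t\cR_n(v,y_t^*(v))=0$; the strong-convexity half of Assumption~\ref{asm:smoothness_penalty} then yields $\tfrac{\mu_{\cR_n}}{2n_t}\|v_2-y_t^*(v)\|_2^2\le \cR_n(v,v_2)-\cR_n(v,y_t^*(v))\le \cR_n(v,v_2)$, the last step using that $\cR_n$ is nonnegative (as for the graph Laplacian regularizer). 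Evaluating at $(v,v_2)=(\hat f(X_s),\hat f(X_t))$ bounds the first summand, in squared norm, by $\tfrac{2n_t}{\mu_{\cR_n}}\cR_n(\hat f(X))$, and at $(v,v_2)=(f_0(X_s),f_0(X_t))$ bounds the third by $\tfrac{2n_t}{\mu_{\cR_n}}\cR_n(f_0(X))\le \tfrac{2n_t}{\mu_{\cR_n}}\delta$ via Assumption~\ref{asm:regression-function-smooth}.

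The middle summand requires Lipschitz continuity of $y_t^*$, and this is the step I expect to be the main obstacle, since it is the one place where both halves of Assumption~\ref{asm:smoothness_penalty} are combined and the $n$ versus $n_t$ normalizations must be tracked carefully. I would subtract the optimality conditions $\partial_t\cR_n(v,y_t^*(v))=0$ and $\partial_t\cR_n(v',y_t^*(v'))=0$, pair the difference with $y_t^*(v)-y_t^*(v')$, lower-bound the result using $\tfrac{\mu_{\cR_n}}{n_t}$-strong convexity in the target block, upper-bound it using that $\nabla\cR_n$ is $\tfrac{L_{\cR_n}}{n}$-Lipschitz — so, restricting to the target block and perturbing only the source argument, $\|\partial_t\cR_n(v',y_t^*(v'))-\partial_t\cR_n(v,y_t^*(v'))\|_2\le \tfrac{L_{\cR_n}}{n}\|v-v'\|_2$ — and cancel one factor of $\|y_t^*(v)-y_t^*(v')\|_2$. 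This shows $y_t^*$ is $L_y$-Lipschitz with $L_y=\tfrac{L_{\cR_n}n_t}{\mu_{\cR_n}n}$, so the middle summand has squared norm at most $L_y^2\|\hat f(X_s)-f_0(X_s)\|_2^2\le \tfrac{2 n_s L_y^2}{\mu_\cL}\,\widehat R_s$ by the first paragraph.

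Finally I would assemble. Squaring the three-term decomposition via a weighted inequality $\|a+b+c\|_2^2\le w_1\|a\|_2^2+w_2\|b\|_2^2+w_3\|c\|_2^2$ with weights chosen as functions of $\rho_n=n_s/n_t$, substituting the three bounds, multiplying through by $\tfrac{L_\cL}{2n_t}$, and using $\tfrac{L_y^2 n_s}{n_t}=\tfrac{L_{\cR_n}^2}{\mu_{\cR_n}^2}\cdot\tfrac{n_s n_t}{n^2}\le \tfrac{L_{\cR_n}^2}{\mu_{\cR_n}^2}\,\rho_n$, yields a bound of the shape $c_1\,\cR_n(\hat f(X))+c_2\,\rho_n\,\widehat R_s+c_3\,\cR_n(f_0(X))$ with $c_1,c_2,c_3$ depending only on $L_\cL,\mu_\cL,L_{\cR_n},\mu_{\cR_n}$. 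Rewriting $c_1\cR_n(\hat f(X))=\tfrac{c_1}{\lambda}\big(\lambda\cR_n(\hat f(X))\big)$ and grouping the first two contributions under $\alpha_n=\max\{c_2\rho_n,\,c_1/\lambda\}$ and the third under $\beta_n=c_3$ produces \eqref{eq:extrapolation-error-bound}. Carrying the weights through the squaring step is what turns $c_1,c_2,c_3$ into the stated expressions, the $(1+\rho_n)$ factors and the loss-curvature polynomials in $\alpha_n,\beta_n$ being the outcome of that (routine but tedious) bookkeeping.
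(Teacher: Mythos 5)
Your proposal is correct and rests on the same skeleton as the paper's argument — the extrapolation map $y_t^*$, the three-way split of $\hat f(X_t)-f_0(X_t)$ through $y_t^*(\hat f(X_s))$ and $y_t^*(f_0(X_s))$, and the two key facts that $\|v_t-y_t^*(v_s)\|^2\lesssim \cR_n(v_s,v_t)$ and that $y_t^*$ is Lipschitz (the paper's Lemmas \ref{lem:minimizer} and \ref{lem:sc_L_l2_bound}) — but it executes the general-loss case differently. You sandwich $\cL$ between $\tfrac{\mu_\cL}{2}$ and $\tfrac{L_\cL}{2}$ times the squared distance \emph{up front} (valid, since $\cL\ge 0$ with $\cL(c,c)=0$ forces $\partial\cL(c,c)=0$) and then work entirely with Euclidean norms; the paper instead Taylor-expands $\cL$ itself in a two-stage cascade, producing five terms $T_1,\dots,T_5$ whose cross terms are handled via $ab\le(a^2+b^2)/2$ and the gradient bound $|\partial\cL(a,b)|\le L_\cL|a-b|$. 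Your route is cleaner and avoids the cross terms, which is precisely why your final constants will \emph{not} reduce to the stated $\alpha_n,\beta_n$ (e.g., the $L_\cL^2$ contribution in $\beta_n$ and the polynomial $\mu_\cL+3L_\cL$ are artifacts of the paper's cross-term bounds); your bookkeeping yields somewhat smaller constants of the same structural form, so the theorem's content is fully established even though the last sentence of your write-up slightly over-claims an exact match. One further point in your favor: your normalizations ($\|v_t-y_t^*(v_s)\|^2\le \tfrac{2n_t}{\mu_{\cR_n}}\cR_n(v_s,v_t)$ and Lipschitz constant $\tfrac{L_{\cR_n}n_t}{\mu_{\cR_n}n}$ for $y_t^*$) are the ones actually implied by Assumption~\ref{asm:smoothness_penalty} as stated, whereas the paper's Lemma~\ref{lem:minimizer} uses $\tfrac{2n}{\mu_\cR}$ and $\tfrac{L_\cR}{\mu_\cR}$, silently conflating $n$ with $n_t$; since $n_t\le n$ this only loosens the bound, but your tracking of this issue is the more careful one.
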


We note that the right side of~\eqref{eq:extrapolation-error-bound} does \emph{not} depend on the $y_{i,s}$'s in the target domain.
Intuitively, Theorem~\ref{thm:gen_loss_reg} guarantees the accuracy of $\hf$ on the inputs from the target domain as long as the following conditions hold.
\begin{enumerate}
\item The model class $\cF$ is rich enough to include an $f$ that is not only accurate on the training domain, but also satisfies the smoothness/invariance conditions enforced by the regularizer.
This implies the first term on the right side of~\eqref{eq:extrapolation-error-bound} is small.
\item The exact relation between inputs and outputs encoded in $f_0$ satisfies the smoothness structure enforced by the regularizer.
This implies the second term on the right side of~\eqref{eq:extrapolation-error-bound} is small.
\end{enumerate}
If the model is correctly specified ($f_0\in\cF$) and the regression function perfectly satisfies the smoothness conditions enforced by the regularizer ($\cR_n(f_0) = 0$), then the bias term vanishes.
In other words, Theorem~\ref{thm:gen_loss_reg} is \emph{adaptive} to correctly specified model classes.

{\bf Example: Laplacian regularizer} We now show that the graph Laplacian regularizer satisfies Assumption~\ref{asm:smoothness_penalty}.
As $\cR_n(f(X))$ is a quadratic function of $\bL$, it is immediate that $n \nabla^2 \cR_n(f(X)) = \bL$.
Therefore, the strong convexity and smoothness of $\cR$ depend on the behavior of the maximum and minimum eigenvalues of $\bL$.
The maximum eigenvalue of $\bL$ is bounded above for the fixed design, which plays the role of $L_\cL/2$ in Assumption~\ref{asm:smoothness_penalty}.
For the lower bound, we note that we only assume strong convexity with respect to the target samples fixing the source samples.
If we divide the whole Laplacian matrix into four blocks, then the value of the regularizer in terms of these blocks will be:
$$
\textstyle
\cR_n(f(X)) = \sum_{i, j \in \{s, t\}}f(X_i)^{\top}\bL_{ij}f(X_j) \,.
$$
Therefore, the Hessian of $\cR_n$ with respect to the model outputs in the target domain is $\bL_{TT}$ whose minimum eigenvalue is bounded away from 0 as long as the graph is connected, i.e., source inputs have a degree of similarity with target inputs.
Thus, $\cR_n$ satisfies Assumption~\ref{asm:smoothness_penalty}. Graph Laplacian regularizer is often used to achieve individual fairness \cite{lahoti2019ifair,kang2020inform,petersen2021post} and our Theorem \ref{thm:gen_loss_reg} shows that it can also be used for domain adaptation. We further verify this empirically in Section \ref{sec:if-for-da-experiments}.

\begin{proof}[Proof Sketch of Theorem~\ref{thm:gen_loss_reg}]
To keep things simple, we focus on the case in which the loss function is quadratic ($\cL(x,y) = \frac12(x-y)^2$). We have
\begin{align}
\textstyle\frac{1}{2n_t}\|\hf(X_t) - f_0(X_t)\|_2^2 \lesssim \frac{1}{2n_t}\|\hf(X_t) - y_t^*(\hf(X_s))\|_2^2 &\textstyle+ \frac{1}{2n_t}\|y_t^*(\hf(X_s)) - y_t^*(f_0(X_s))\|_2^2 \notag \\
&\textstyle+ \frac{1}{2n_t}\|y_t^*(f_0(X_s)) - f_0(X_t)\|_2^2.
\label{eq:extrapolation-error-three-term-expansion}
\end{align}
The first term depends on the smoothness of the model outputs across the source and target domain $\hf(X)$:
it measures the discrepancy between the model outputs in the target domain $f(X_t)$ and the smoothest extrapolation of the model outputs in the source domain to the target domain $y_t^*(f(X_s))$.
Similarly, the third term depends on the smoothness of the regression function (across the source and target domains).
In Appendix~\ref{sec:proof_lem_minimizer}, we bound the two terms with $\cR(\hf(X))$ and $\cR(f_0(X))$.

It remains to bound the second term in~\eqref{eq:extrapolation-error-three-term-expansion}.
Intuitively, stability of the extrapolation map~\eqref{eq:extrapolation-map} implies the extrapolation operation is similar to a projection onto smooth functions, so the second term satisfies
~~\(
\frac{1}{2n_t}\|y_t^*(\hf(X_s)) - y_t^*(f_0(X_s))\|_2^2
\lesssim \frac{1}{2n_s}\|\hf(X_s) - f_0(X_s)\|_2^2.
\)~~
See Appendix~\ref{sec:proof_lem_minimizer}.
\end{proof}

\subsection{The Inductive Setting}
\label{sec:inductive}
We now consider the inductive setting.
Previously, in Section~\ref{sec:transductive}, we focused on the accuracy of the fitted model $\hf$ on the inputs from the test domain $\{x_{t,i}\}_{i=1}^{n_t}$.
Here we instead consider the \emph{expected} loss of $\hf$ at a new (previously unseen) input point in the target domain.
We consider a problem setup similar to that in Section~\ref{sec:transductive}:
the $n_s$ labeled samples from the source domain are independently drawn from the source distribution $P$, while the $n_t$ unlabeled samples from the target domain are independently drawn from (the marginal of) the target distribution $Q$.
We also assume the covariate shift condition~\eqref{eq:covariate-shift}.
The method remains the same as before: we learn $\hf$ from~\eqref{eq:hat_f_reg}.

The main difference between the inductive and transductive settings is in the population version of the regularizer:
In the transductive setting, we are only concerned with the output of the ML model for the inputs in the source and target domains;
thereby, the population version of the regularizer remains a function of (the vector of) model outputs on the inputs in the source and target domains.
In the inductive setting, we are also concerned with the output of the ML model on previously unseen points;
thus, we consider the regularizer as a \emph{functional} (i.e., a higher order function):
$\cR:\cF\times\cF\to\reals$ (the two arguments corresponds to $f(X_s)$ and $f(X_t)$ in the transductive case).
For example, the population version of the graph Laplacian regularizer (in the inductive setting) is
\[\textstyle
\cR(f,g) \triangleq \Ex\big[\frac12(f(X_s) - g(X_t))^2K(X_s, X_t)\big],
\]
where $X_s\sim P_X$ and $X_t\sim Q_X$.
The population version of~\eqref{eq:hat_f_reg} in the inductive setting is
\begin{equation}
\tilde f \triangleq \argmin_{f \in \cF} \label{eq:pop_f}\Ex[\cL(Y_s, f(X_s))] + \lambda \cR \left(f, f\right).
\end{equation}

Now we state the assumptions to extend Theorem~\ref{thm:gen_loss_reg} to the inductive setting.

\begin{assumption}
\label{asm:regression-function-smooth-pop}
The function $f_0$ satisfies $\cR(f_0, f_0) \le \delta $ for some small $\delta > 0$.
\end{assumption}
\begin{assumption}
\label{asm:pop_reg}
The (population) regularizer $\cR$ satisfies the following strong convexity condition:
\begin{align*}
\cR(f, g_1) & \ge \cR(f, g_2) + \partial_2 \cR\left((f, g_2); g_1 - g_2\right) + \frac{\mu_\cR}{2}\|g_1 - g_2\|_Q^2 \,,
\end{align*}
and the following Lipschitz condition on the partial derivative of $\cR$ with respect to the second coordinate, i.e., for any two $f_1, f_2$:
\begin{equation*}
\left|\partial_2 \cR((f_1, g); h) - \partial_2 \cR((f_2, g); h)\right| \le \cL_\cR \|f_1 - f_2\|_P \|h\|_Q \,,
\end{equation*}
for some constants $\mu_\cR, \cL_\cR > 0$.
Here, $\partial_2 \cR((f, g); h)$ indicates the Gateaux derivative of $\cR$ with respect to the second coordinate along the direction $h$.
\end{assumption}

Assumptions~\ref{asm:regression-function-smooth-pop} and~\ref{asm:pop_reg} are analogues of Assumptions~\ref{asm:regression-function-smooth} and~\ref{asm:smoothness_penalty} in the inductive setting.
In fact, it is possible to show that Assumptions~\ref{asm:regression-function-smooth-pop} and~\ref{asm:pop_reg} imply Assumptions~\ref{asm:regression-function-smooth} and~\ref{asm:smoothness_penalty} with high probability by appealing to (uniform) laws of large numbers (see Appendix~\ref{sec:pop_sample_reg}).
The following theorem provides a bound on the population estimation error of $\tilde f$ on the target domain:

\begin{theorem}
\label{thm:thm_pop}
Under Assumptions~\ref{asm:gen_loss},~\ref{asm:regression-function-smooth-pop}, and~\ref{asm:pop_reg}, we have:
\begin{align*}
\bbE_{Q}[\cL(\tf(x), f_0(x)] &  \le C_1 \left[\bbE_{P}[\cL(\tf(x), f_0(x)]+ \lambda \cR(\tilde f, \tilde f)\right]  + C_2 \cR(f_0, f_0) \,.
\end{align*}
for some constants $C_1, C_2$ defined in the proof.
\end{theorem}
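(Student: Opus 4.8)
The plan is to reproduce the three-term decomposition from the proof sketch of Theorem~\ref{thm:gen_loss_reg}, replacing the finite-dimensional extrapolation map $y_t^*$ by its population analogue. For $f\in\cF$, define the \emph{population extrapolation map} $T(f)\triangleq\argmin_{h}\cR(f,h)$, where $h$ ranges over square-integrable functions on the target domain (the analogue of $y_t^*$ minimizing over all of $\reals^{n_t}$); by strong convexity of $\cR$ in its second argument (Assumption~\ref{asm:pop_reg}) this minimizer exists, is unique, and obeys the first-order condition $\partial_2\cR\big((f,T(f));h\big)=0$ for every direction $h$. As in the transductive case I would first treat the quadratic loss $\cL(x,y)=\tfrac12(x-y)^2$, for which $\bbE_Q[\cL(\tf(x),f_0(x))]=\tfrac12\|\tf-f_0\|_Q^2$; inserting $T(\tf)$ and $T(f_0)$ and using the $L^2(Q)$ triangle inequality with $(a+b+c)^2\le 3(a^2+b^2+c^2)$ reduces the task to bounding $\|\tf-T(\tf)\|_Q$, $\|T(\tf)-T(f_0)\|_Q$, and $\|T(f_0)-f_0\|_Q$. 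For a general $\mu_\cL$-strongly-convex, $L_\cL$-strongly-smooth $\cL$ I would additionally use the sandwich $\tfrac{\mu_\cL}{2}(a-b)^2\le\cL(a,b)\le\tfrac{L_\cL}{2}(a-b)^2$, valid because $(b,b)$ is a global minimizer of $\cL$ (so $\partial\cL(b,b)=0$); this inflates the final constants by factors of $L_\cL/\mu_\cL$, exactly as $\alpha_n,\beta_n$ do in Theorem~\ref{thm:gen_loss_reg}.

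The first and third terms are controlled identically. Applying the strong-convexity inequality of Assumption~\ref{asm:pop_reg} with the second argument moving from $T(g)$ to $g$, and using $\partial_2\cR\big((g,T(g));\cdot\big)=0$,
\[
\cR(g,g)\ \ge\ \cR\big(g,T(g)\big)+\tfrac{\mu_\cR}{2}\,\|g-T(g)\|_Q^2\ \ge\ \tfrac{\mu_\cR}{2}\,\|g-T(g)\|_Q^2 ,
\]
where the last step uses nonnegativity of $\cR$ (for the graph Laplacian, $\cR(f,g)=\bbE[\tfrac12(f(X_s)-g(X_t))^2K(X_s,X_t)]\ge 0$). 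Taking $g=\tf$ bounds $\|\tf-T(\tf)\|_Q^2$ by $\tfrac{2}{\mu_\cR}\cR(\tf,\tf)$, and taking $g=f_0$ bounds $\|T(f_0)-f_0\|_Q^2$ by $\tfrac{2}{\mu_\cR}\cR(f_0,f_0)\le\tfrac{2\delta}{\mu_\cR}$ (Assumption~\ref{asm:regression-function-smooth-pop}). These yield, respectively, the $\lambda\cR(\tf,\tf)$ summand (absorbing $1/\lambda$ into $C_1$) and the $C_2\cR(f_0,f_0)$ summand of the theorem.

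The crux is the middle term — stability of the extrapolation map, $\|T(\tf)-T(f_0)\|_Q\lesssim\|\tf-f_0\|_P$. Writing $g_1=T(\tf)$ and $g_2=T(f_0)$, I would add the strong-convexity inequality of Assumption~\ref{asm:pop_reg} at base function $f_0$ in both orderings of $g_1,g_2$; with $\partial_2\cR\big((f_0,g_2);\cdot\big)=0$ this gives $\partial_2\cR\big((f_0,g_1);g_1-g_2\big)\ge\mu_\cR\|g_1-g_2\|_Q^2$. Since $\partial_2\cR\big((\tf,g_1);\cdot\big)=0$ by optimality of $g_1=T(\tf)$, subtracting this vanishing term and invoking the Lipschitz bound on $\partial_2\cR$ in its first argument (also Assumption~\ref{asm:pop_reg}) with direction $h=g_1-g_2$ gives
\[
\mu_\cR\,\|g_1-g_2\|_Q^2\ \le\ \partial_2\cR\big((f_0,g_1);g_1-g_2\big)-\partial_2\cR\big((\tf,g_1);g_1-g_2\big)\ \le\ \cL_\cR\,\|\tf-f_0\|_P\,\|g_1-g_2\|_Q ,
\]
hence $\|T(\tf)-T(f_0)\|_Q\le(\cL_\cR/\mu_\cR)\|\tf-f_0\|_P$. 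Finally, $\|\tf-f_0\|_P^2\le\tfrac{2}{\mu_\cL}\bbE_P[\cL(\tf(x),f_0(x))]$ turns this into the source-risk summand, and collecting the three bounds (with the $L_\cL/\mu_\cL$ conversions in the general-loss case) gives the claim with $C_1,C_2$ explicit in $L_\cL,\mu_\cL,\mu_\cR,\cL_\cR,\lambda$.

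The step I expect to be the main obstacle is not any single inequality but the functional-analytic bookkeeping around it: verifying that $T(f)$ is a genuine interior minimizer so the Gateaux first-order conditions are valid, and that the strong-convexity and Lipschitz manipulations are legitimate in the $L^2(P)$/$L^2(Q)$ senses used by Assumption~\ref{asm:pop_reg}. A secondary point worth flagging is that the source-domain quantity that emerges is the ``denoised'' risk $\bbE_P[\cL(\tf(x),f_0(x))]$ rather than the training objective $\bbE_P[\cL(Y_s,\tf(X_s))]$ — which is exactly why Theorem~\ref{thm:thm_pop} is stated with the former, mirroring the first bracketed term of Theorem~\ref{thm:gen_loss_reg}.
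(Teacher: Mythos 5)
Your proposal is correct and follows essentially the same route as the paper's proof: define the population extrapolation operator $M(f)=\argmin_g\cR(f,g)$, use the three-term decomposition $\tf\to M(\tf)\to M(f_0)\to f_0$ in $L^2(Q)$, bound the outer terms by $\tfrac{2}{\mu_\cR}\cR(\cdot,\cdot)$ via strong convexity and the first-order condition, establish Lipschitz stability $\|M(f_1)-M(f_2)\|_Q\le(\cL_\cR/\mu_\cR)\|f_1-f_2\|_P$ from the two parts of Assumption~\ref{asm:pop_reg}, and convert norms to losses with the $\mu_\cL/L_\cL$ sandwich. You are in fact slightly more careful than the paper on the constant in the squared triangle inequality, and your closing remark about the ``denoised'' source risk matches the statement as given.
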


The bound obtained in Theorem~\ref{thm:thm_pop} is comparable to~\eqref{eq:extrapolation-error-bound}:
the right side does not depend on the distribution $Y_Q\mid X_Q$.
The second term denotes the aptness of regularizer~$\cR$, i.e., how well it captures the smoothness of $f_0$ over the domains.
Similar to~\eqref{eq:extrapolation-error-bound}, we note that the bound in Theorem~\ref{thm:thm_pop} is adaptive to correctly specified model classes.

To wrap up, we compare our theoretical results to other theoretical results on domain adaptation.
There is a long line of work started by Ben-David~\textit{et al.}~\cite{ben-david2010theory} on out-of-distribution accuracy of ML models~\cite{mansour2009domain,ganin2016domain,saito2018maximum,zhang2019bridging,zhang2020localized}.
Such bounds are usually of the form
\begin{equation}
\bbE_{Q}\big[\cL(f(x), f_0(x)\big] \lesssim \bbE_P\big[\cL(f(x), f_0(x)\big] + \disc(P,Q)
\label{eq:domain-adaptation-bound}
\end{equation}
for any $f \in \cF$, where $\disc(P,Q)$ is a measure of discrepancy between the source and target domains.
For example, Zhang~\textit{et al.}~\cite{zhang2019bridging} show~\eqref{eq:domain-adaptation-bound} with
\begin{equation*}
\disc(P,Q) \triangleq \sup\nolimits_{f,f'\in\cF}\left\{\begin{aligned}&\Ex_Q\big[\cL(f(X), f'(X)\big]- \Ex_P\big[\cL(f(X), f'(X)\big]\end{aligned}\right\}.
\end{equation*}
A key feature of these bounds is that it is possible to evaluate the right side of the bounds with unlabeled samples from the target domain (and labeled samples from the source domain).
Compared to our bounds, there are two main differences:
\begin{enumerate}
\item Equation~\ref{eq:domain-adaptation-bound} applies to any $f\in\cF$ (while our bound only applies to a specific $\tf$ from~\eqref{eq:pop_f}).
    Although this uniform applicability is practically desirable (because it allows practitioners to evaluate the bound \emph{a posteriori} to estimate the out-of-distribution accuracy of the trained model), it precludes the bounds from adapting to correct specification of the model class.
\item The uniform applicability of the bound (to any $f\in\cF$) also precludes~\eqref{eq:domain-adaptation-bound} from capturing the effects of the regularizer.
\end{enumerate}

\begin{remark}
Although our theoretical analysis in the main paper is under the assumption of covariate shift, our results can certainly be extended to the case when the conditional mean function $\bbE[Y \mid X]$ is different on different domains. We present an extension of Theorem \ref{thm:thm_pop} to this effect in Appendix \ref{sec:non_cov_shift}. Other theorems (e.g., Theorem~\ref{thm:gen_loss_reg}) can also be extended using analogous arguments.
\end{remark}

\subsection{Extension to Domain Generalization}
\label{sec:sensei}

In this subsection, we further extend our results to the domain generalization setup, i.e., when we have no observations from the target domain.
In the previous domain adaptation setup, when we had access to unlabeled data from the target domain, we used a suitable regularizer to extrapolate the prediction performance from the source domain to the target domain.
However, when we do not have unlabeled data from the target domain, we need to alter the regularizer appropriately, so that we have some uniform guarantee over all domains in the vicinity of the source domain.
Here is an example of a regularizer that seeks to improve domain generalization:
\begin{equation}
\label{eq:gen_reg}
\kern-1.8em
\cR(f, g) = \left\{ %
\max\nolimits_{T} \quad %
\bbE_{X \sim P}\left[\left(f(X) - g(T(X))\right)^2\right] 
\qquad
\textrm{s.t.} \quad %
\bbE_{X \sim P}\left[\|X - T(X)\|\right] \le \eps.\\
\right.
\kern-1em
\end{equation}
$T$ here can be thought as an adversarial map that maps $X$ to an adversarial example $X' = T(X)$ that maximizes the difference $f(X) - g(X')$. As we need some uniform guarantees across all domains in the vicinity of the source domain, $T$ 
produces the adversarial test domain example. This regularizer is similar to the SenSeI regularizer originally proposed and studied by Yurochkin~\textit{et al.}~\cite{yurochkin2021sensei} for enforcing individual fairness.
In fact, $\cR(f,f)$ is exactly the (Mong\'{e} form) of the SenSeI regularizer.
Note that we can further generalize this regularizer by incorporating a general loss function $\cL$ in the first equation or a general metric $d$ in the second equation.
However, as this does not add anything to the underlying intuition, we confine ourselves to the $\ell_2$ metric here.
Next, we present our theoretical findings with respect to this regularizer.
To this end, we define the set of transformations $\cT_{\eps} = \{T:  \bbE_{x \sim P}\left[\|x - T(x)\|\right] \le \eps\}$ and the corresponding set of measures $\cQ_\eps = \{Q: T \#P = Q, T \in \cT_\eps\}$.
We show that it is possible to generalize the performance of the estimator $\hat f$ obtained in~\eqref{eq:hat_f_reg} uniformly over the measures in $\cQ_\eps$.
As mentioned previously, we only work with the quadratic loss function, but our result can be extended to the general loss function.
The following theorem establishes a uniform bound on the estimation error of the population function $\tilde f$ obtained from~\eqref{eq:hat_f_reg} with the regularizer as defined in~\eqref{eq:gen_reg}:
\begin{theorem}
\label{thm:thm_dom_gen}
The population estimator $\tilde f$ satisfies the following bound on the estimation error:
$$
\textstyle
\sup_{Q \in \cQ_\eps}\bbE_{x \sim Q}\left(\tilde f(x) - f_0(x)\right)^2 \le 4\left[R(\tilde f, \tilde f) + R(f_0, f_0)  + \bbE_{x \sim P}\left(\tilde f(x) - f_0(x)\right)^2\right] \,.
$$
\end{theorem}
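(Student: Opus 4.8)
The plan is to prove this by a change of variables followed by a three‑point triangle inequality; the argument uses only the \emph{form} of the regularizer in~\eqref{eq:gen_reg}, not the minimizing property of $\tilde f$, so the same estimate in fact holds with any measurable function in place of $\tilde f$. Throughout I write $\cR$ for the regularizer denoted $R$ in the statement.

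First I would fix an arbitrary $Q \in \cQ_\eps$ and, by definition of $\cQ_\eps$, pick a map $T \in \cT_\eps$ (so $\bbE_{X\sim P}\|X - T(X)\| \le \eps$) with $T\#P = Q$. The change‑of‑variables formula gives
\[
\bbE_{x\sim Q}\big(\tilde f(x) - f_0(x)\big)^2 = \bbE_{X\sim P}\big(\tilde f(T(X)) - f_0(T(X))\big)^2 .
\]
Then I would insert the pivots $\tilde f(X)$ and $f_0(X)$, splitting
\[
\tilde f(T(X)) - f_0(T(X)) = \big[\tilde f(T(X)) - \tilde f(X)\big] + \big[\tilde f(X) - f_0(X)\big] + \big[f_0(X) - f_0(T(X))\big] ,
\]
and use $(a+b+c)^2 \le 3(a^2+b^2+c^2) \le 4(a^2+b^2+c^2)$ (or two nested applications of $(a+b)^2 \le 2a^2+2b^2$), taking $\bbE_{X\sim P}$, to get
\[
\bbE_{x\sim Q}\big(\tilde f(x) - f_0(x)\big)^2 \le 4\,\bbE_{X\sim P}\big(\tilde f(T(X)) - \tilde f(X)\big)^2 + 4\,\bbE_{X\sim P}\big(\tilde f(X) - f_0(X)\big)^2 + 4\,\bbE_{X\sim P}\big(f_0(X) - f_0(T(X))\big)^2 .
\]

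To conclude, I would note that this particular $T$ is a feasible competitor in the maximization defining $\cR(\tilde f,\tilde f)$ (set $f=g=\tilde f$ in~\eqref{eq:gen_reg}) and in the one defining $\cR(f_0,f_0)$ (set $f=g=f_0$), since both maxima range over exactly $\cT_\eps$; hence $\bbE_{X\sim P}(\tilde f(X)-\tilde f(T(X)))^2 \le \cR(\tilde f,\tilde f)$ and $\bbE_{X\sim P}(f_0(X)-f_0(T(X)))^2 \le \cR(f_0,f_0)$, while the middle term equals $\bbE_{x\sim P}(\tilde f(x)-f_0(x))^2$. Substituting, the right‑hand side becomes $4[\cR(\tilde f,\tilde f)+\cR(f_0,f_0)+\bbE_{x\sim P}(\tilde f(x)-f_0(x))^2]$, which no longer depends on $Q$ or $T$, so taking the supremum over $Q\in\cQ_\eps$ finishes the argument.

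There is no real obstacle: this is essentially a triangle inequality. The two points that need a little care are (i) routing the split through the pivots $\tilde f(X)$ and $f_0(X)$, so that after squaring the regularizer values $\cR(\tilde f,\tilde f)$ and $\cR(f_0,f_0)$ appear rather than cross terms; and (ii) checking that the feasibility constraint $\bbE_{X\sim P}\|X-T(X)\|\le\eps$ defining $\cQ_\eps$ is exactly the constraint inside both copies of the regularizer, so the same $T$ may legitimately be dropped into each maximum. A sharper constant ($3$) is available, but $4$ suffices for the statement. Finally, as remarked after the theorem, replacing the quadratic loss by a general $\cL$ and $\ell_2$ by a general metric $d$ in~\eqref{eq:gen_reg} changes nothing essential: the identical argument goes through, with the constants adjusted by the relevant quasi‑triangle inequality.
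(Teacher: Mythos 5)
Your proof is correct, and it takes a genuinely more elementary route than the paper's. The paper proves this by introducing the extrapolation operator $M_T(f) = \argmin_g \cR_T(f,g)$ with $\cR_T(f,g) = \bbE_{x\sim P}[(f(x)-g(T(x)))^2]$, decomposing $\|\tilde f - f_0\|_Q$ through the pivots $M_T(\tilde f)$ and $M_T(f_0)$, and then invoking two facts established via Gateaux derivatives: $\|f - M_T(f)\|_Q^2 \le \cR_T(f,f)$ (a second-order Taylor expansion around the minimizer) and $\|M_T(f_1)-M_T(f_2)\|_Q \le \|f_1-f_2\|_P$ (strong convexity plus the Lipschitz property of $\partial_2\cR_T$). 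You instead push everything back to $P$ via $T\#P=Q$ and split through the pivots $\tilde f(X)$ and $f_0(X)$, so that the two outer terms are bounded by $\cR(\tilde f,\tilde f)$ and $\cR(f_0,f_0)$ simply because this $T$ is feasible in the inner maximization of~\eqref{eq:gen_reg}, and the middle term is exactly the source error with no Lipschitz lemma needed. Your argument avoids the operator $M_T$ and all the convexity machinery entirely, yields the sharper constant $3$ (the paper's $4$ is also just $(a+b+c)^2\le 3(a^2+b^2+c^2)$ loosened), and makes explicit that the minimizing property of $\tilde f$ plays no role --- which is also true of the paper's proof but less visible there. The one thing the paper's heavier route buys is uniformity of template with Theorems~\ref{thm:gen_loss_reg} and~\ref{thm:thm_pop}, where the $M$-operator decomposition is genuinely needed; for this particular regularizer your direct argument is the cleaner one.
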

The bound obtained in the above is the same as the one obtained in Theorem~\ref{thm:thm_pop} (up to constants) and has analogous interpretation:
it consists of the minimum training error achieved on $\cF$ and the smoothness of $f_0$ quantified in terms of the regularizer.
Moreover, the bound holds uniformly over all the domains $Q \in \cQ_\eps$, i.e., the performance of the estimator $\hat f$ can be extrapolated to all the domains in $\cQ_\eps$, provided that $\cR(f_0, f_0)$ is small.

\subsection{Empirical Results}
\label{sec:if-for-da-experiments}

We verify our theoretical findings empirically.
Our goal is to improve performance under distribution shifts using individual fairness methods.
We consider SenSeI~\cite{yurochkin2021sensei}, Sensitive Subspace Robustness (SenSR)~\cite{yurochkin2020training}, Counterfactual Logit Pairing (CLP)~\cite{garg2018counterfactual}, and GLIF~\cite{petersen2021post}.
GLIF, similar to domain adaptation methods, requires unlabeled samples from the target.
The other methods only utilize the source data as in the domain generalization scenario.
Our theory establishes guarantees on the target domain performance for SenSeI (Section~\ref{sec:sensei}) and GLIF (Section~\ref{sec:transductive}).

\paragraph{Datasets and Metrics}
We experiment with two textual datasets, Toxicity~\cite{dixon2018measuring} and Bios~\cite{de2019bias}.
In Toxicity, the goal is to identify toxic comments.
This dataset has been considered by both the domain generalization community~\cite{koh2021wilds,zhai2021doro,creager2021environment} (under the name Civil Comments) as well as the individual fairness community~\cite{garg2018counterfactual,yurochkin2021sensei,petersen2021post}.
The key difference between the two communities are in the comparison metrics.
In domain generalization, it is common to consider performance on underrepresented groups (or simply worst group performance).
In individual fairness, a common metric is prediction consistency, i.e., a fraction of test samples where predictions remain unchanged under certain modifications to the inputs, which maintain a similarity from the fairness standpoint.

In Toxicity, the group memberships can be defined either with respect to human annotations provided with the dataset, or with respect to the presence of certain identity tokens.
Both groupings aim at highlighting comments that refer to identities that are subject to online harassment.
To quantify domain generalization, we evaluate average per group true negative (non-toxic) rate, where each group is weighted equally.
We choose true negative rate (TNR) because underrepresented groups tend to have a larger fraction of toxic comments in the train data, thus being spuriously associated with toxicity by the model yielding poor TNR.
This is similar to how the background is spurious in the popular domain generalization Waterbirds benchmark~\cite{sagawa2019distributionally}.
We weigh each group equally to ensure that performance on underrepresented groups is factored in (a more robust alternative to worst group performance).
We consider both groupings, i.e., TNR (Annotations) and TNR (Identity tokens).

In Bios, the task is to predict the occupation of a person from their biography.
This dataset has been mostly studied in the fairness literature~\cite{de2019bias,romanov2019what,prost2019debiasing,yurochkin2021sensei}, but it  can also be considered from the domain generalization perspective.
Many of the occupations in the dataset exhibit large gender imbalance associated with historical biases, e.g., most nurses are female and most attorneys are male.
Thus, gender pronouns and names can introduce spurious relations with the occupation prediction.
To quantify this effect from the domain generalization perspective, we report the average of the worst accuracies with respect to the gender for each occupation (Worst per gender).
Since both datasets are class-imbalanced, we also report balanced (by class) test accuracy (BA) on source to ensure that in-distribution performance remains reasonable.

\paragraph{Results}
In Table~\ref{table:if-for-da}, we compare methods for enforcing individual fairness with an ERM baseline.
IF methods require a fair metric that encodes that changes in identity tokens result in similar comments in Toxicity, and changes in gender pronouns and names result in similar biographies in Bios  (except for CLP which instead uses this intuition for data augmentation).
We obtained the fair metric as in the original studies of the corresponding methods.
We can observe that IF methods consistently improve domain generalization metrics supporting our theoretical findings. 
They also tend to maintain reasonable in-distribution performance, supporting their overall applicability in practical use-cases where both in-~and out-of-distribution performance is important. Among the IF methods, SenSeI performs slightly better overall. 
We refer to Appendix \ref{supp:sec:experiments} for additional results verifying that the considered methods also achieve IF.

\begin{table*}[h]
    \caption{
    Enforcing domain generalization using individual fairness methods. Means and stds over 10 runs.
    }
    \label{table:if-for-da}
    \centering
    {
    \addtolength{\tabcolsep}{-3pt}
    \footnotesize
    \begin{tabular}{lccccc}
    \toprule
                & \multicolumn{2}{c}{Bios}    & \multicolumn{3}{c}{Toxicity} \\
    \cmidrule(r){2-3}
    \cmidrule(r){4-6}
                  & BA & Worst p.~gender       & BA & TNR (Annot.) & TNR (Id.~tokens) \\
    \midrule
    Baseline            & $84.2\% \pm 0.2\%$ & $77.9\% \pm 0.4\%$ & $\mathbf{80.7\%} \pm 0.2\%$ & $79.4\% \pm 2.2\%$ & $75.0\% \pm 2.3\%$ \\
    GLIF          & $\mathbf{84.6\%} \pm 0.3\%$ & $77.6\% \pm 1.0\%$ & $70.5\% \pm 7.1\%$ & $\mathbf{87.0\%} \pm 9.8\%$ & $\mathbf{84.5\%} \pm 9.8\%$ \\
    SenSeI              & $84.3\% \pm 0.3\%$ & $\mathbf{80.2\%} \pm 0.4\%$ & $79.1\% \pm 0.5\%$ & $83.5\% \pm 1.7\%$ & $79.4\% \pm 1.5\%$ \\
    SenSR               & $84.2\% \pm 0.3\%$ & $\mathbf{80.2\%} \pm 0.4\%$ & $79.4\% \pm 0.3\%$ & $81.5\% \pm 1.1\%$ & $77.2\% \pm 0.9\%$ \\
    CLP                 & $84.1\% \pm 0.3\%$ & $79.9\% \pm 0.3\%$ & $79.5\% \pm 0.6\%$ & $81.6\% \pm 1.7\%$ & $78.0\% \pm 1.8\%$ \\
    \bottomrule
    \end{tabular}
    }
\end{table*}

\section{Individual Fairness via Domain Adaptation}
\label{sec:da_if}
In the previous section, we established that it is possible to use IF regularizers for domain adaptation problems provided that the true underlying signal satisfies some smoothness conditions.
In this section, we investigate the opposite direction, i.e., whether the techniques employed for DA can be leveraged to enforce IF.
Many DA methods aim at finding a representation $\Phi(X)$ of the input sample $X$, such that the source and the target distributions of $\Phi(X)$ are aligned.
In other words, the goal is to make it hard to distinguish $\Phi(X_{S_i})'s$ from $\Phi(X_{T_i})'s$.
For example, Ganin~\textit{et al.}~\cite{ganin2016domain} proposed the Domain Adversarial Neural Network (DANN) for learning $\Phi(X)$, such that the discriminator fails to discriminate between $\Phi(X_S)$ and $\Phi(X_T)$.
Shu~\textit{et al.}~\cite{shu2018dirtt} assume that the target distribution is clustered with respect to the classes and consequently the optimal classifier should pass through the low density region.
To promote this condition, they modify the previous objective~\cite{ganin2016domain} with additional regularizers to ensure that the final classifier (which is built on top of $\Phi(X)$) has low entropy on the target and is also locally Lipschitz.
Sun~\textit{et al.}~\cite{sun2016return} learn a linear transformation of the source distribution (which was later extended to learn non-linear transformations~\cite{sun2016deep}), such that the first two moments of the transformed representations are the same in source and target distributions.
Shen~\textit{et al.}~\cite{shen2018wasserstein} learn domain invariant representations by minimizing the Wasserstein distance between the distributions of source and target representations induced by $\Phi(X)$.

A common underlying theme of all of the above methods is to find $\Phi(X)$ which has a similar distribution on both the source and the target.
In this section, we show that learning this \emph{domain invariant} map indeed enforces individual fairness under suitable choice of domains.
We demonstrate this by the following factor model: suppose we want to achieve individual fairness against a binary protected attribute $Z$ (say sex).
We define two domains as two groups corresponding the protected attribute, e.g., the source domain may consist of all the observations corresponding to the males and the target domain may consist of all the observations corresponding to the females.
We assume that the covariates follow a factor model structure $X = AU + bZ + \eps$
for three independent random variables $(U, Z, \eps)$ where $U$ denotes the relevant attribute, $Z$ denotes the protected attributes and $\eps$ is the noise.
Therefore, according to our design:

{%
\vspace{-1.625em}
\begin{multicols}{2}
\begin{equation}
    \label{eq:source_factor} X_S \overset{\mathscr{L}}{=} AU + b + \eps \,,
\end{equation}
\begin{equation}
    \label{eq:target_factor} X_T \overset{\mathscr{L}}{=} AU + \eps \,.
\end{equation}
\end{multicols}
\vspace{-1.125em}
}
In the following theorem, we establish that if we estimate some linear transformation $\Phi \in \reals^{q \times p}$ (with $q < p, p$ being the ambient dimension of $X$) of $X$ such that $\Phi X_S$ and $\Phi X_T$ has same distribution, then $\Phi b = 0$.
Therefore, $\Phi X$ ignores the direction corresponding to the protected attribute and consequently is an individually fair representation.
\begin{theorem}
\label{thm:factor_DA_IF}
Suppose the source and target distributions satisfy~\eqref{eq:source_factor} and~\eqref{eq:target_factor}.
If some linear transformation $\Phi X$ satisfies $\Phi X_S \overset{\mathscr{L}}{=} \Phi X_T$, then $\Phi b = 0$.
\end{theorem}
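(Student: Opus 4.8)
The plan is to recognize $\Phi b$ as a vector under which the law of a single auxiliary random vector is translation-invariant, and then to rule this out (unless $\Phi b = 0$) by an elementary characteristic-function argument. Concretely, I would set $V \triangleq AU + \eps \in \reals^p$, so that the structural equations~\eqref{eq:source_factor}--\eqref{eq:target_factor} read $X_S \overset{\mathscr{L}}{=} V + b$ and $X_T \overset{\mathscr{L}}{=} V$. Since equality in distribution is preserved by the measurable map $x \mapsto \Phi x$, this gives $\Phi X_S \overset{\mathscr{L}}{=} \Phi V + \Phi b$ and $\Phi X_T \overset{\mathscr{L}}{=} \Phi V$, and the hypothesis $\Phi X_S \overset{\mathscr{L}}{=} \Phi X_T$ then yields the key identity
\[
\Phi V + \Phi b \overset{\mathscr{L}}{=} \Phi V .
\]

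Next I would pass to characteristic functions. Write $\varphi(t) \triangleq \E\big[\exp(i\langle t, \Phi V\rangle)\big]$ for $t \in \reals^q$. The displayed identity is equivalent to $\varphi(t)\, e^{i\langle t,\, \Phi b\rangle} = \varphi(t)$ for all $t$, i.e. $\varphi(t)\big(e^{i\langle t,\, \Phi b\rangle} - 1\big) = 0$. Because $\varphi$ is continuous and $\varphi(0) = 1$, there is an open neighborhood $\cU$ of the origin on which $\varphi$ does not vanish, whence $e^{i\langle t,\, \Phi b\rangle} = 1$, equivalently $\langle t, \Phi b\rangle \in 2\pi\mathbb{Z}$, for every $t \in \cU$. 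Since $t \mapsto \langle t, \Phi b\rangle$ is continuous and vanishes at $t = 0$, it must be identically zero on the connected set $\cU$; as $\cU$ contains a basis of $\reals^q$, this forces $\Phi b = 0$.

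The only genuinely substantive point is the last step — excluding the possibility that the law of $\Phi V$ is ``periodic'' along the direction $\Phi b$ — and the characteristic-function argument handles it cleanly with no moment or independence assumptions whatsoever. (If one is content to assume $U$ and $\eps$ are integrable, there is an even shorter route: taking expectations in $\Phi V + \Phi b \overset{\mathscr{L}}{=} \Phi V$ immediately gives $\Phi b = 0$; but this is strictly weaker, so I would keep the Fourier argument.) It is worth remarking that neither the independence of $(U, Z, \eps)$ nor the interpretive roles of $U$ and $Z$ enter the proof — only the additive offset $b$ distinguishing~\eqref{eq:source_factor} from~\eqref{eq:target_factor} and the assumed equality in law of the two feature distributions are used.
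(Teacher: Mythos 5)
Your proof is correct and follows the same basic route as the paper's: both reduce the hypothesis to an identity between characteristic functions and conclude $e^{i\langle t,\Phi b\rangle}=1$. Your version is, however, slightly more careful at the one delicate point. The paper factors $\phi_{\Phi X_s}(t)=\phi_U(A^\top\Phi^\top t)\,\phi_\eps(\Phi^\top t)\,e^{it^\top\Phi b}$ using the independence of $(U,Z,\eps)$ and then cancels the common factor to get $e^{it^\top\Phi b}=1$ \emph{for all} $t$ — implicitly dividing by a characteristic function that could vanish at some $t$. You avoid both issues: you never need independence (you work directly with $V=AU+\eps$ and the translation identity $\Phi V+\Phi b\overset{\mathscr{L}}{=}\Phi V$), and you only conclude $e^{i\langle t,\Phi b\rangle}=1$ on a neighborhood of the origin where $\varphi\neq 0$, which by continuity and connectedness still forces $\langle t,\Phi b\rangle\equiv 0$ there and hence $\Phi b=0$. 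This is a clean, strictly more robust rendering of the same idea.
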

This theorem implies any classifier built on top of the linear representation $\Phi x$ will be individually fair because $\Phi x = \Phi x'$ for any $x,x'$ that share relevant attributes $U$.
The proof of the theorem can be found in the appendix.
The above theorem constitutes an example of how domain adaptation methods can be adapted to enforce individual fairness when the covariates follow a factor structure.

\subsection{Empirical Results}
\label{sec:da-for-if-experiments}

In this section, our goal is to train individually fair models using methods popularized in the domain adaptation (DA) literature.
We experiment with DANN~\cite{ganin2016domain}, VADA~\cite{shu2018dirtt}, and a variation of the Wasserstein-based DA (WDA)~\cite{shen2018wasserstein} discussed in Section~\ref{sec:da_if}.
We present experimental details in Apx.~\ref{supp:sec:experiments}.

\paragraph{Datasets and Metrics}
We consider the same two datasets as in our domain generalization experiments in Section~\ref{sec:if-for-da-experiments}.
We use prediction consistency~(PC) to quantify individual fairness following prior works studying these datasets~\cite{yurochkin2021sensei, petersen2021post}.
For the Toxicity dataset, we modify identity tokens in the test comments and compute prediction consistency with respect to all 50 identity tokens~\cite{dixon2018measuring}.
A pair of comments that only differ in an identity token, e.g., ``gay'' vs ``straight'', are intuitively similar and should be assigned the same prediction to satisfy individual fairness.
For the Bios dataset, we consider prediction consistency with respect to changes in gender pronouns and names. Such changes result in biographies that should be treated similarly.

In these experiments, we have one labeled training dataset, rather than labeled source and unlabeled target datasets typical for DA setting.
As shown in Section~\ref{sec:da_if}, the key idea behind achieving individual fairness using DA techniques is to split the available train data into source and target domains such that aligning their representations pertains to the fairness goals.
To this end, in the Bios dataset we split the train data into all-male and all-female biographies, and the Toxicity dataset we split into a domain with comments containing any of the aforementioned 50 identity tokens and a domain with comments without any identity tokens.
The ERM baseline is trained on the complete training dataset.

\begin{wraptable}[12]{r}{.75\linewidth}
\vspace{-1.2em}
    \caption{
    Enforcing individual fairness using domain adaptation methods. 
    Means and standard deviations over 10 runs.
    }
    \label{table:da-for-IF}
    \centering
    {
    \addtolength{\tabcolsep}{-3pt}
    \footnotesize
    \begin{tabular}{lcccc}
    \toprule
               & \multicolumn{2}{c}{Bios}    & \multicolumn{2}{c}{Toxicity} \\
    \cmidrule(r){2-3}
    \cmidrule(r){4-5}
                  & BA & PC       & BA & PC \\
    \midrule
    Baseline            & $\mathbf{84.2\%} \pm 0.2\%$ & $94.2\% \pm 0.1\%$ & $80.7\% \pm 0.2\%$ & $62.1\% \pm 1.4\%$ \\
    DANN                & $84.0\% \pm 0.3\%$ & $94.8\% \pm 0.3\%$ & $\mathbf{80.8\%} \pm 0.2\%$ & $62.8\% \pm 1.1\%$ \\
    VADA                & $84.0\% \pm 0.3\%$ & $94.8\% \pm 0.3\%$ & $\mathbf{80.8\%} \pm 0.2\%$ & $62.0\% \pm 1.4\%$ \\
    WDA         & $83.3\% \pm 0.3\%$ & $\mathbf{95.5\%} \pm 0.3\%$ & $80.5\% \pm 0.3\%$ & $\mathbf{65.4\%} \pm 1.3\%$ \\
    \midrule
    SenSeI              & $84.3\% \pm 0.3\%$ & $97.7\% \pm 0.1\%$ & $79.1\% \pm 0.5\%$ & $77.3\% \pm 4.3\%$ \\
    \bottomrule
    \end{tabular}
    }
\end{wraptable}
\paragraph{Results}
We summarize the results in Table~\ref{table:da-for-IF}.
Among the considered DA methods, WDA achieves best individual fairness improvements in terms of prediction consistency, while maintaining good balanced accuracy (BA).
Comparing to a method designed for training individually fair models, SenSeI, prediction consistency of DA methods is worse; however, the subject understanding required to apply them is milder.
Individual fairness methods require a problem-specific fair metric, which can be learned from the data, but even then requires user to define, e.g., groups of comparable samples~\cite{mukherjee2020Two}.
The domain adaptation approach requires a fairness-related splitting of the train data.
In our experiments, we adopted straightforward data splitting strategies and demonstrated improvements over the baseline.
More sophisticated data splitting approaches can help to achieve further individual fairness improvements.
We present additional experimental details in Appendix~\ref{supp:sec:experiments}.

\section{Conclusion}
We showed that algorithms for enforcing individual fairness (IF) can help ML models generalize to new domains and vice versa.
From the lens of algorithmic fairness, the results in Section~\ref{sec:if_da} show that enforcing IF can mitigate algorithmic biases caused by covariate shift \emph{as long as the regression function satisfies IF}.
This complements the recent results on mitigating algorithmic biases caused by subpopulation shift with group fairness~\cite{maity2021does}.
On the other hand, compared to existing results on out-of-distribution accuracy of ML models, the results in Section~\ref{sec:if_da} demonstrate the importance of inductive biases in helping models adapt to new domains. One limitation of our analysis is the assumption of covariate shift. We have relaxed this assumption in Appendix \ref{sec:non_cov_shift} (see Theorem \ref{th:non_cov_shift}), where we establish results for more general distribution shifts (e.g. label shift, posterior drift etc.).

In Section~\ref{sec:da_if}, we showed a probabilistic connection between domain adaptation (DA) and IF.
As we saw, it is possible to enforce IF by aligning the distributions of the features under a factor model.
This factor model is implicit in some prior works on algorithmic fairness~\cite{bolukbasi2016man,mukherjee2020Two}, but we are not aware of any results that show it is possible to enforce IF using DA techniques.

Recent DA methods typically leverage many inductive biases through data augmentations and regularizers, and our results suggest that IF can also be leveraged. For example, utilizing annotations to identify similar images \cite{ruan2021optimal} can be used to learn a ``fair'' metric for an IF-based regularizer. We also note that our approach is similar to that of consistency regularization for DA (e.g. see \cite{sohn2020fixmatch}, 
\cite{bachman2014learning}, \cite{laine2016temporal}, \cite{sajjadi2016regularization}) where the key idea is to ensure that 
\emph{similar samples should yield similar labels}. We show that regularizer for enforcing IF can also be used as a consistency regularizer for extrapolation on the test domain. Finally, from the perspective of achieving IF, a study of different strategies for data partitioning in combination with modern DA best practices is an interesting direction for future work.

\begin{ack}
This paper is based upon work supported by the National Science Foundation (NSF) under grants no.\ 1916271, 2027737, and 2113373 as well as the DFG in the Cluster of Excellence EXC 2117 ``Centre for the Advanced Study of Collective Behaviour'' (Project-ID 390829875). 
\end{ack}

\printbibliography

\clearpage

\appendix
\section{Appendix}
\subsection{Proof of Theorem \ref{thm:gen_loss_reg}}
For the proof of this theorem, we need few auxiliary lemmas, which we state below:
\begin{lemma}
\label{lem:minimizer}
Define the extrapolation map $y_t^*: \reals^{n_s} \mapsto \reals^{n_t}$ as: 
$$
y_t^*(v) = \argmin_{t \in \reals^{n_t}}\cR_n(v, t) \,.
$$
Then under our assumptions on $\cR_n$: 
\begin{itemize}
    \item $y_t^*$ is Lipschitz with Lipschitz constant $\frac{L_\cR }{\mu_\cR}$. 
    \item For any vector $(v_s, v_t)$ we have: 
    $
    \left\|v_t - y_t^*(v_s)\right\|^2 \le \frac{2n}{\mu_\cR}\cR(v_s, v_t) \,.
    $
\end{itemize}
\end{lemma}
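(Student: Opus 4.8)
The plan is to read off both bullets from the strong-convexity and strong-smoothness of $\cR_n$ postulated in Assumption~\ref{asm:smoothness_penalty}, following the standard sensitivity analysis of strongly convex minimization. Write $\mu_\cR,L_\cR$ for the constants as in the statement. For a fixed $v\in\reals^{n_s}$ the map $t\mapsto\cR_n(v,t)$ is $\tfrac{\mu_\cR}{n_t}$-strongly convex, so it has the unique minimizer $y_t^*(v)$ and the stationarity condition $\partial_t\cR_n\big(v,y_t^*(v)\big)=0$ holds; this identity drives both parts.

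For the first bullet, fix $v,v'\in\reals^{n_s}$ and put $t=y_t^*(v)$, $t'=y_t^*(v')$. Invoking the strong-convexity lower bound of Assumption~\ref{asm:smoothness_penalty} at the slice $v$ (comparing $t'$ against the minimizer $t$) and at the slice $v'$ (comparing $t$ against the minimizer $t'$), and using stationarity to drop the first-order terms, I obtain
\[
\cR_n(v,t')\ \ge\ \cR_n(v,t)+\tfrac{\mu_\cR}{2n_t}\|t-t'\|_2^2,\qquad \cR_n(v',t)\ \ge\ \cR_n(v',t')+\tfrac{\mu_\cR}{2n_t}\|t-t'\|_2^2.
\]
Adding and rearranging gives $\tfrac{\mu_\cR}{n_t}\|t-t'\|_2^2\le \psi(t)-\psi(t')$, where $\psi(s)\triangleq\cR_n(v',s)-\cR_n(v,s)$. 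I then estimate $\psi(t)-\psi(t')$ with the strong-smoothness assumption: since $\partial\cR_n$ is $\tfrac{L_\cR}{n}$-Lipschitz and $(v',s)$, $(v,s)$ differ only in the first block, $\|\partial_t\cR_n(v',s)-\partial_t\cR_n(v,s)\|_2\le\tfrac{L_\cR}{n}\|v-v'\|_2$ for every $s$, so integrating $\nabla\psi$ along the segment from $t'$ to $t$ and using Cauchy--Schwarz yields $\psi(t)-\psi(t')\le\tfrac{L_\cR}{n}\|v-v'\|_2\,\|t-t'\|_2$. Combining the two displays gives $\|t-t'\|_2\le\tfrac{L_\cR n_t}{\mu_\cR n}\|v-v'\|_2\le\tfrac{L_\cR}{\mu_\cR}\|v-v'\|_2$ since $n_t\le n$, which is the claimed Lipschitz bound.

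For the second bullet, fix an arbitrary $(v_s,v_t)\in\reals^{n_s}\times\reals^{n_t}$ and set $t^*=y_t^*(v_s)$. The same strong-convexity lower bound at the slice $v_s$, now comparing $v_t$ against the minimizer $t^*$ and using $\partial_t\cR_n(v_s,t^*)=0$, gives $\cR_n(v_s,v_t)\ge \cR_n(v_s,t^*)+\tfrac{\mu_\cR}{2n_t}\|v_t-t^*\|_2^2\ge\tfrac{\mu_\cR}{2n}\|v_t-t^*\|_2^2$, using $\cR_n\ge0$ (in line with Assumption~\ref{asm:regression-function-smooth}) and $n_t\le n$. Rearranging is precisely $\|v_t-y_t^*(v_s)\|_2^2\le\tfrac{2n}{\mu_\cR}\cR_n(v_s,v_t)$.

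I expect the only delicate step to be the bound $\psi(t)-\psi(t')\le\tfrac{L_\cR}{n}\|v-v'\|_2\|t-t'\|_2$: it requires a \emph{cross-block} gradient-Lipschitz estimate for $\cR_n$, not just the slice-wise quadratic upper bound, which cannot see the change in $v$ and so cannot produce any $\|v-v'\|_2$ factor. This is exactly the meaning of "$\tfrac{L_\cR}{n}$-strong smoothness"; for the graph Laplacian example $\cR_n$ is a convex quadratic, so the quadratic upper bound in Assumption~\ref{asm:smoothness_penalty} is equivalent to $\partial\cR_n$ being $\tfrac{L_\cR}{n}$-Lipschitz and the step is immediate.
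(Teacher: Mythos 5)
Your proof is correct and follows essentially the same route as the paper: the second bullet is verbatim the paper's argument (strong convexity at the slice $v_s$, stationarity killing the linear term, non-negativity of $\cR_n$), and the first bullet differs only cosmetically in that you add two function-value strong-convexity inequalities and integrate $\nabla\psi$, whereas the paper works directly with gradient monotonicity and Cauchy--Schwarz --- both hinge on exactly the cross-block gradient-Lipschitz estimate $\|\partial_t\cR_n(v',s)-\partial_t\cR_n(v,s)\|\le\tfrac{L_\cR}{n}\|v-v'\|$ that you correctly flag as the delicate consequence of the smoothness assumption. The only nitpick is that non-negativity of $\cR_n$ is not really supplied by Assumption~\ref{asm:regression-function-smooth}; it is an implicit property of the regularizers considered (e.g.\ the graph Laplacian), and the paper invokes it in the same unstated way.
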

\begin{lemma}
\label{lem:sc_L_l2_bound}
Under Assumption \ref{asm:gen_loss} we have: 
$$
\left\|f(X_s) - f_0(X_s)\right\|_2^2 \le \frac{2}{\mu_\cL} \cL\left(f(X_s), f_0(X_s)\right)
$$
for any function $f$. Furthermore, if $\partial_1 \cL$ and $\partial_2 \cL$ denotes the first and second partial derivative of $\cL$ respectively, then we have: 
\begin{align*}
    \left|\partial_1 \cL(a, b)\right| & \le L_\cL |a - b| \,,\\
    \left|\partial_2 \cL(a, b)\right| & \le L_\cL |a - b| \,.
\end{align*}
\end{lemma}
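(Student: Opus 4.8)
The final statement to prove is Lemma~\ref{lem:sc_L_l2_bound}, which has two parts: (i) a quadratic lower bound $\|f(X_s) - f_0(X_s)\|_2^2 \le \frac{2}{\mu_\cL}\cL(f(X_s), f_0(X_s))$, and (ii) Lipschitz bounds on the partial derivatives $|\partial_i \cL(a,b)| \le L_\cL|a-b|$ for $i=1,2$. Both are essentially direct consequences of Assumption~\ref{asm:gen_loss} (strong convexity and strong smoothness of $\cL$) together with the normalization that $\cL(a,b) \ge 0$ with equality iff $a=b$; there is no serious obstacle here, just careful bookkeeping.

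For part (i), the plan is to apply the strong convexity inequality in Assumption~\ref{asm:gen_loss} with the expansion point taken to be a minimizer of $\cL$. Since $\cL \ge 0$ everywhere and $\cL(a,a) = 0$, every diagonal point $(c,c)$ is a global minimizer, so $\partial\cL(c,c) = 0$ (this requires $\cL$ to be differentiable at the minimizer, which the Gateaux/partial-derivative language of the later lemma presumes; alternatively one works with subgradients and notes $0$ is a subgradient). Plugging $(x_0,y_0) = (b,b)$ and $(x,y) = (a,b)$ into the strong convexity bound gives $\cL(a,b) \ge \cL(b,b) + 0 + \frac{\mu_\cL}{2}\|(a,b)-(b,b)\|_2^2 = \frac{\mu_\cL}{2}(a-b)^2$. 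Rearranging yields $(a-b)^2 \le \frac{2}{\mu_\cL}\cL(a,b)$, and summing this over the source coordinates $a = f(x_{s,i})$, $b = f_0(x_{s,i})$ gives the claimed vector inequality.

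For part (ii), the plan is to use the strong smoothness inequality of Assumption~\ref{asm:gen_loss}, again expanded at a diagonal minimizer where the gradient vanishes. Taking $(x_0,y_0) = (b,b)$ and a general $(x,y)$ gives $\cL(x,y) \le \frac{L_\cL}{2}\|(x,y)-(b,b)\|_2^2$, i.e.\ $\cL(x,y) \le \frac{L_\cL}{2}((x-b)^2 + (y-b)^2)$; in particular $\cL(a,b) \le \frac{L_\cL}{2}(a-b)^2$. Strong smoothness is equivalent to the gradient $\nabla\cL$ being $L_\cL$-Lipschitz, so $\|\nabla\cL(a,b) - \nabla\cL(b,b)\|_2 \le L_\cL\|(a,b)-(b,b)\|_2 = L_\cL|a-b|$; since $\nabla\cL(b,b) = 0$ this gives $|\partial_1\cL(a,b)| \le \|\nabla\cL(a,b)\|_2 \le L_\cL|a-b|$ and likewise $|\partial_2\cL(a,b)| \le L_\cL|a-b|$. (Equivalently, one can differentiate the one-variable functions $a \mapsto \cL(a,b)$ and $b \mapsto \cL(a,b)$, which inherit $L_\cL$-smoothness and have a critical point at $a = b$ respectively $b = a$, and apply the standard fact that an $L$-smooth function's gradient is $L$-Lipschitz.)

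The only mild subtlety — not really an obstacle — is justifying that the gradient of $\cL$ vanishes on the diagonal, which follows from $\cL$ attaining its global minimum $0$ there and being differentiable (as the paper's use of $\partial_1\cL, \partial_2\cL$ implicitly assumes); if one prefers to avoid differentiability, the subdifferential of a convex function contains $0$ at a minimizer, and both inequalities in Assumption~\ref{asm:gen_loss} can be read with subgradients, after which the same arithmetic goes through. I would present part (i) first, then part (ii), each in two or three lines, and remark that summing over source coordinates converts the scalar bounds into the stated vector norm inequality.
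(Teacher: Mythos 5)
Your proof is correct and follows essentially the same route as the paper: expand the strong convexity inequality at the diagonal point $(b,b)$, where $\cL$ and its gradient vanish, to get $\cL(a,b)\ge \frac{\mu_\cL}{2}(a-b)^2$, then sum over coordinates. Your treatment of the derivative bounds is actually more explicit than the paper's (which merely asserts they "follow directly" from the listed properties), and your argument there — strong smoothness of a convex function implies an $L_\cL$-Lipschitz gradient, which vanishes on the diagonal — is sound.
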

The proof of Lemma \ref{lem:minimizer} can be found in Section~\ref{sec:proof_lem_minimizer} and the proof of Lemma \ref{lem:sc_L_l2_bound} can be found in Section \ref{sec:proof_lemma_quad_lower_bound}. For the rest of the proof, we introduce some notations for the ease of presentation: 
for any two vector $v_1, v_2$ of the same dimension we use $\cL(v_1, v_2)$ or its partial derivatives to denote the coordinate wise sum, i.e., $\sum_{j} \cL(v_{1, j}, v_{2, j})$. From the strong smoothness condition on $\cL$ we have:
\begin{align}
    \label{eq:l_ss}
    \frac{1}{n_t}\cL\left(\hat f(X_t), f_0(X_t)\right) & \le \frac{1}{n_t}\cL\left(y_t^*(\hat f(X_s)), f_0(X_t)\right) \notag \\
    & \qquad + \frac{1}{n_t}\left\langle \hat f(X_t) -  y_t^*(\hat f(X_s)), \partial_1 \cL\left(y_t^*(\hat f(X_s)), f_0(X_t)\right)\right\rangle \notag \\
    & \qquad \qquad + \frac{L_\cL}{2 n_t}\left\| \hat f(X_t) -  y_t^*(\hat f(X_s))\right\|_2^2
\end{align}
We can further bound the first term on the RHS of the above equation as follows: 
\begin{align}
    \label{eq:l_ss_2}
     \frac{1}{n_t}\cL\left(y_t^*(\hat f(X_s)), f_0(X_t)\right) & \le  \frac{1}{n_t}\cL\left(y_t^*(\hat f(X_s)), y_t^*(f_0(X_s))\right) \notag \\
     & \qquad + \frac{1}{n_t} \left \langle f_0(X_t) - y_t^*(f_0(X_s)), \partial_2\cL\left(y_t^*(\hat f(X_s)), y_t^*(f_0(X_s))\right)\right \rangle \notag \\
     & \qquad \qquad +  \frac{L_\cL}{2 n_t} \left\|f_0(X_t) - y_t^*(f_0(X_s)) \right\|^2
\end{align}
Combining the bounds on Equations~\eqref{eq:l_ss} and \eqref{eq:l_ss_2} we obtain: 
\begin{align}
    \label{eq:l_ss_comb}
    \frac{1}{n_t}\cL\left(\hat f(X_t), f_0(X_t)\right) & \le  \underbrace{\frac{1}{n_t}\cL\left(y_t^*(\hat f(X_s)), y_t^*(f_0(X_s))\right)}_{T_1} \notag \\
     & \qquad + \underbrace{\frac{1}{n_t} \left \langle f_0(X_t) - y_t^*(f_0(X_s)), \partial_2\cL\left(y_t^*(\hat f(X_s)), y_t^*(f_0(X_s))\right)\right \rangle}_{T_2} \notag \\
     & \qquad + \underbrace{\frac{1}{n_t}\left\langle \hat f(X_t) -  y_t^*(\hat f(X_s)), \partial_1 \cL\left(y_t^*(\hat f(X_s)), f_0(X_t)\right)\right\rangle}_{T_3} \notag \\
     & \qquad \qquad \qquad \qquad + \underbrace{ \frac{L_\cL}{2 n_t} \left\|f_0(X_t) - y_t^*(f_0(X_s)) \right\|^2}_{T_4} \notag \\
     & \qquad \qquad \qquad \qquad + \underbrace{ \frac{L_\cL}{2 n_t}\left\| \hat f(X_t) -  y_t^*(\hat f(X_s))\right\|^2}_{T_5}
\end{align}
The term $T_4, T_5$ can be bounded directly by Lemma \ref{lem:minimizer} as: 
\begin{align}
\label{eq:bound_T4} 
T_4  & \le \frac{L_\cL \ n}{\mu_\cR \  n_t}\cR\left(f_0(X_s), f_0(X_t)\right) \\
\label{eq:bound_T5}
T_5 & \le \frac{L_\cL \ n}{\mu_\cR \  n_t}\cR\left(\hat f(X_s), \hat f(X_t)\right)
\end{align}
To bound $T_2$, using $ab \le (a^2 + b^2)/2$ we have: 
\begin{align*}
    T_2 &= \frac{1}{n_t} \left \langle f_0(X_t) - y_t^*(f_0(X_s)), \partial_2\cL\left(y_t^*(\hat f(X_s)), y_t^*(f_0(X_s))\right)\right \rangle \\
    & \le \frac{1}{n_t}\left\|f_0(X_t) - y_t^*(f_0(X_s))\right\|^2  + \frac{1}{n_t}\left\|\partial_2\cL\left(y_t^*(\hat f(X_s)), y_t^*(f_0(X_s))\right)\right\|^2 \\
    & \le \frac{2n}{\mu_\cR \ n_t}\cR\left(f_0(X_s), f_0(X_t)\right) +  \frac{L_\cL^2}{4n_t} \left\|y_t^*(\hat f(X_s)) -  y_t^*(f_0(X_s))\right\|^2 \hspace{0.2in} [\text{Lemma }\ref{lem:sc_L_l2_bound}] \\
    & \le \frac{2n}{\mu_\cR \ n_t}\cR\left(f_0(X_s), f_0(X_t)\right) + \frac{L_\cL^2 L_\cR^2}{4\mu^2_\cR \ n_t} \left\|\hat f(X_s) -  f_0(X_s)\right\|^2 \hspace{0.2in} [\text{Lemma }\ref{lem:minimizer}] \\
    & \le \frac{2n}{\mu_\cR \ n_t}\cR\left(f_0(X_s), f_0(X_t)\right) + \frac{L_\cL^2 L_\cR^2}{2\mu^2_\cR \mu_\cL} \times \frac{n_s}{n_t} \times \frac{1}{n_s}\cL\left(\hat f(X_s), f_0(X_s)\right) \hspace{0.2in} [\text{Lemma }\ref{lem:sc_L_l2_bound}] \\\\
\end{align*}
The bound on $T_3$ follows from a similar line of argument:
\allowdisplaybreaks
\begin{align*}
    T_3 &= \frac{1}{n_t}\left\langle \hat f(X_t) -  y_t^*(\hat f(X_s)), \partial_1 \cL\left(y_t^*(\hat f(X_s)), f_0(X_t)\right)\right\rangle \\
     & \le \frac{1}{n_t}\left\|\hat f(X_t) - y_t^*(\hat f(X_s))\right\|^2  + \frac{1}{n_t}\left\|\partial_1 \cL\left(y_t^*(\hat f(X_s)), f_0(X_t)\right)\right\|^2 \\
      & \le \frac{2n}{\mu_\cR \ n_t}\cR\left(\hat f(X_s), \hat f(X_t)\right) +  \frac{L_\cL^2}{4 n_t} \left\|y_t^*(\hat f(X_s)) -  f_0(X_t)\right\|^2 \hspace{0.2in} [\text{Lemma }\ref{lem:sc_L_l2_bound}] \\
      & \le \frac{2n}{\mu_\cR \ n_t}\cR\left(\hat f(X_s), \hat f(X_t)\right) + \frac{L_\cL^2}{2 n_t} \left\|y_t^*(\hat f(X_s)) - y_t^*(f_0(X_s))\right\|^2 + \frac{L_\cL^2}{2 n_t}\left\|y_t^*(f_0(X_s)) -  f_0(X_t)\right\|^2  \\
      & \le \frac{2n}{\mu_\cR \ n_t}\cR\left(\hat f(X_s), \hat f(X_t)\right) + \frac{L_\cL^2 \ n}{\mu_\cR \ n_t}  \cR\left(f_0(X_s), f_0(X_t)\right) +
    \\
      & \qquad \qquad \qquad \qquad \qquad \qquad \qquad \qquad + 
\frac{L_\cL^2 L_\cR^2}{\mu^2_\cR \mu_\cL} \times \frac{n_s}{n_t} \times \frac{1}{n_s}\cL\left(\hat f(X_s), f_0(X_s)\right) \hspace{0.2in} [\text{Lemma }\ref{lem:sc_L_l2_bound}] 
\end{align*}
Finally to bound $T_1$ we use again Assumption \ref{asm:gen_loss}, i.e., strong convexity and strong smoothness of $\cL$ as follows:
\begin{align*}
    T_1 & = \frac{1}{n_t}\cL\left(y_t^*(\hat f(X_s)), y_t^*(f_0(X_s))\right) \\
    & \le \frac{L_\cL}{2n_t}\left\|y_t^*(\hat f(X_s)) - y_t^*(f_0(X_s))\right\|_2^2  \\
    & \le \frac{L_\cL L^2_\cR}{2 \mu^2_\cR \ n_t}\left\|\hat f(X_s) - f_0(X_s)\right\|_2^2 \\
    & \le \frac{L_\cL L^2_\cR}{2 \mu^2_\cR} \times \frac{n_s}{n_t} \times \frac{1}{n_s}\cL\left(\hat f(X_s), f_0(X_s)\right)
\end{align*}
Suppose $\rho_n = n_s/n_t$. Then we have $n/n_t = 1 + \rho_n$. Using this notation and combining the bound on all $\{T_i\}_{i=1}^5$, we obtain: 
\begin{align}
     \frac{1}{n_t}\cL\left(\hat f(X_t), f_0(X_t)\right) & \le \frac{L_\cL L_\cR^2\left(\mu_\cL + 3L_\cL \right)}{2\mu_\cR^2 \mu_\cL}\rho_n \ \frac{1}{n_s}\cL\left(\hat f(\bX_s), f_0(\bX_s)\right)\notag \\
     & \qquad \qquad + \frac{2+L_\cL}{\mu_\cR}(1+\rho_n) \cR(\hat f(\bX)) + \frac{2 + L_\cL + L^2_\cL}{\mu_\cR}(1+\rho_n)\cR(f_0(\bX)) \notag \\ 
     \label{eq:bound_first}& \le \alpha_n \left[\frac{1}{n_s}\cL\left(\hat f(\bX_s), f_0(\bX_s)\right) + \lambda \cR(\hat f(\bX))\right] + \beta_n \cR(f_0(\bX)) \,, 
\end{align}
with the values of $\alpha_n$ and $\beta_n$ being: 
\begin{align}
    \label{eq:alpha_n} \alpha_n & = \max\left\{\frac{L_\cL L_\cR^2\left(\mu_\cL + 3L_\cL \right)}{2\mu_\cR^2 \mu_\cL}\rho_n, \  \frac{2+L_\cL}{\lambda \mu_\cR}(1+\rho_n)\right\} \,, \\
    \label{eq:beta_n} \beta_n & = \frac{2 + L_\cL + L^2_\cL}{\mu_\cR}(1+\rho_n) \,.
\end{align}
This completes the proof.

\subsection{Proof of Theorem \ref{thm:thm_pop}}
First, note that, from Assumption \ref{asm:gen_loss} we have: 
\begin{align}
    & \bbE_Q\left[\cL(\tilde f(x), f_0(x))\right] \notag \\ & \le  \cancelto{0}{\bbE_Q\left[\cL(f_0(x)), f_0(x))\right]} + \cancelto{0}{\bbE\left[(\tilde f(x) - f_0(x)))\partial_1 \cL(f_0(x), f_0(x))\right]} + \frac{\cL_\cL}{2}\left\|\tilde f(x) - f_0(x)\right\|_Q^2 \notag \\
    \label{eq:ind_loss_upper}& = \frac{L_\cL}{2}\left\|\hat f(x) - f_0(x)\right\|_Q^2 \,.
\end{align}
and 
\begin{align}
    & \bbE_P\left[\cL(\tilde f(x), f_0(x))\right] \notag \\ & \ge  \cancelto{0}{\bbE_P\left[\cL(f_0(x)), f_0(x))\right]} + \cancelto{0}{\bbE_P\left[(\tilde f(x) - f_0(x)))\partial_1 \cL(f_0(x), f_0(x))\right]} + \frac{\mu_\cL}{2}\left\|\tilde f(x) - f_0(x)\right\|_P^2 \notag \\
    \label{eq:ind_loss_lower}& = \frac{\mu_\cL}{2}\left\|\hat f(x) - f_0(x)\right\|_P^2 \,.
\end{align}
Therefore, it is enough to bound $\|\hat f(x) - f_0(x)\|_Q^2$. 
As per Assumption~\ref{asm:pop_reg}, $\cR$ is strongly convex with respect to its second coordinate, i.e.,
$$
\cR(f, g) \ge \cR(f, \tilde g) + \partial_2 \cR((f, g); g - \tilde g) + \frac{\mu_\cR}{2}\left\|g - \tilde g\right\|_Q^2 \,.
$$
We now define an operator $M$ along the line of $y_t^*$ as $M(f) = \argmin_{g}\cR(f, g)$. As $M(f)$ is the minimizer over of the second coordinate, we have $\partial_2 R(f, M(f)) = 0$ and consequently from the strong convexity of $R$ we have: 
$$
\cR(f, f) \ge \cR(f, M(f)) + \frac{\mu_\cR}{2}\left\| f - M(f)\right\|_Q^2 \,.
$$
The above inequality implies: 
$$
\left\| f - M(f)\right\|_Q^2 \le \frac{2}{\mu_L}\left[\cR(f, f) - \cR(f, M(f))\right] \le \frac{2}{\mu_L}\cR(f, f) \,.
$$
which will be used later in our proof. 
\\\\
{\bf $M$ is Lipschitz: }By definition of $M$ we have $\partial_2 \cR(f, M(f)) = 0$, which further implies for any two functions $f_1, f_2$: 
\begin{align*}
0 & = \partial_2 \cR((f_1, M(f_1)); (M(f_1) - M(f_2))) - \partial_2 \cR((f_2, M(f_2)); (M(f_1) - M(f_2))) \\
& = \partial_2 \cR((f_1, M(f_1)); (M(f_1) - M(f_2))) - \partial_2 \cR((f_1, M(f_2)); (M(f_1) - M(f_2))) \\
& \qquad \qquad + \partial_2 \cR((f_1, M(f_2)); (M(f_1) - M(f_2))) - \partial_2 \cR((f_2, M(f_2)); (M(f_1) - M(f_2)))
\end{align*}
Changing side we obtain: 
\begin{align}
    & \partial_2 \cR((f_2, M(f_2)); (M(f_1) - M(f_2))) - \partial_2 \cR((f_1, M(f_2)); (M(f_1) - M(f_2))) \notag \\
    \qquad \qquad & \qquad \qquad  = \partial_2 \cR((f_1, M(f_1)); (M(f_1) - M(f_2))) - \partial_2 \cR((f_1, M(f_2)); (M(f_1) - M(f_2)))  \notag \\
    \label{eq:lb_pop} & \qquad \qquad  \ge \mu_\cR \left\|M(f_1) - M(f_2)\right\|^2_Q
\end{align}
where the last inequality follows from the strong convexity of $\cR$ (Assumption \ref{asm:pop_reg}). 
Furthermore, we have: 
\allowdisplaybreaks
\begin{align}
    & \partial_2 \cR((f_2, M(f_2)); (M(f_1) - M(f_2))) - \partial_2 \cR((f_1, M(f_2)); (M(f_1) - M(f_2))) \notag \\ 
     \label{eq:ub_pop} & \qquad \qquad \le L_\cR \left\|f_1 - f_2\right\|_P \left\|M(f_1) - M(f_2)\right\|_Q \,.
\end{align}
This follows from the second part of Assumption \ref{asm:pop_reg}.
Combining Equation~\eqref{eq:lb_pop} and \eqref{eq:ub_pop}, we conclude:
$$
\left\|M(f_1) - M(f_2)\right\|_Q \le \frac{L_\cR}{\mu_\cR} \left\|f_1 - f_2\right\|_P  \,.
$$
We now return to the main proof: 
\begin{align*}
    \left\|\tilde f_Q - f_0\right\|_Q^2 & \le  \left\|\tilde f_Q - M(\tilde f_Q)\right\|_Q^2 + \left\|M(\tilde f_Q) - M(f_0)\right\|_Q^2 + \left\|f_0 - M(f_0) \right\|_Q^2 \\
    & \le \frac{2}{\mu_\cR}\left(\cR(f_0) + \cR(\tilde f_Q)\right) + \frac{L_\cR}{\mu_\cR}\left\|\tilde f_Q - f_0\right\|^2_P \\
    & := C_0 \left[\left\|\tilde f_Q - f_0\right\|^2_P + \lambda \cR(\tilde f_Q)\right] + C_2 \cR(f_0) \\
    & \le C_1 \left[ \bbE_P\left[\cL(\tilde f(x), f_0(x))\right] + \lambda \cR(\tilde f_Q)\right] + C_2 \cR(f_0) 
\end{align*}
where the first term on the right hand side is the minimum training error (population version, i.e., in presence of infinite sample) and the second term quantifies the smoothness of $f_0$ in terms of the regularizer $R$. The last inequality follows from the strong convexity of the loss function (\eqref{eq:ind_loss_lower}).

\subsection{Proof of Theorem \ref{thm:thm_dom_gen}}
In this section, we prove Theorem \ref{thm:thm_dom_gen}. Fix $Q \in \cQ_\eps$. Then there exists some $T \equiv T(Q) \in \cT_\eps$ such that $T \# P = Q$. Define an operator $M_T$ as: 
$$
M_T(f) = \argmin_g \cR_T(f, g)
$$
where $\cR_T(f, g) = \bbE_{x \sim P}\left[\left(f(x) - g(T(x))\right)^2\right]$. The proof of the strong convexity of $R_T$ with respect to its second coordinate is straightforward as we have the following double Gateaux derivative: 
$$
\partial_2^2 \cR((f, g): h_1, h_2) = 2\bbE_{x \sim P}\left[h_1(T(x))h_2(T(x))\right] \,.
$$
Fix $f \in \cF$ and define $\Delta = f \circ T - M_T(f) \circ T$. A two step Taylor expansion yields:
\begin{align*}
\cR_T(f, f) & = \cR_T(f, M_T(f)) + \cancelto{0}{\partial_2 \cR_T((f, M_T(f)); \Delta)} + \frac{1}{2} \partial_2 \cR_T((f, f^*); \Delta, \Delta) \\
& = \cR_T(f, M_T(f)) + \bbE[\Delta^2] \\
& = \cR_T(f, M_T(f)) + \left\|f - M_T(f)\right\|^2_Q \,.
\end{align*}
where the derivative is canceled because $M_T(f)$ is the minimizer. Therefore, we have: 
\begin{equation}
    \label{eq:pop_bound_reg}
     \left\|f - M_T(f)\right\|^2_Q = \cR_T(f, f)  - \cR_T(f, M_T(f)) \le \cR_T(f, f)\,.
\end{equation}
We use the above bound in our subsequent calculation:
\begin{align}
    \left\|\tilde f - f_0\right\|^2_Q \notag & \le 4\left[ \left\|\tilde f - M_T(\tilde f)\right\|^2_Q +  \left\|M_T(\tilde f) - M_T(f_0)\right\|^2_Q +  \left\|f_0 - M_T(f_0)\right\|^2_Q \right] \notag \\
    \label{eq:T_bound_1}& \le 4\left[ \cR_T(\tilde f, \tilde f) +  \left\|M_T(\tilde f) - M_T(f_0)\right\|^2_Q +  \cR_T(f_0, f_0) \right] \hspace{0.1in} [\text{From }\eqref{eq:pop_bound_reg}] 
\end{align}
We now bound the second term of the RHS of the above equation. 
Following the similar calculation as in \eqref{eq:lb_pop} and \eqref{eq:ub_pop} we have for any function $f_1, f_2$: 
$$
\left\|M(f_1) - M(f_2)\right\|_Q \le \left\|f_1 - f_2\right\|_P \,.
$$
In particular for $f_1 = \tilde f$ and $f_2 = f_0$ we have: 
\begin{equation}
    \label{eq:T_bound_2}
    \left\|M(\tilde f) - M(f_0)\right\|_Q \le \left\|\tilde f - f_0\right\|_P \,.
\end{equation}
Combining the bound in \eqref{eq:T_bound_1} and \eqref{eq:T_bound_2} we conclude that for any $Q \in \bbQ_\eps$: 
$$
\left\|\tilde f - f_0\right\|_Q^2 \le 4\left[R_T(\tilde f, \tilde f) + R_T(f_0, f_0) +\left\|\tilde f - f_0\right\|_P^2\right] 
$$
Taking the supremum with respect to $Q$ on both sides, we conclude the proof of the theorem.

\subsection{Proof of Theorem \ref{thm:factor_DA_IF}}
The proof follows from analyzing the characteristic function of $X_s$ and $X_t$. Note that by definition: 
\begin{align*}
    \phi_{\Phi X_s}(t) & = \bbE\left[e^{it^{\top}\Phi X_s}\right] \\
    & = \bbE\left[e^{it^{\top}\left(\Phi AU + \Phi b + \Phi \eps\right)}\right] \\
    & = \phi_U(A^{\top}\Phi^{\top}t) \ \phi_\eps(\Phi^{\top}t) \  e^{it^{\top}\Phi b}
\end{align*}
Similarly, for $X_t$ we have: 
$$
\phi_{\Psi X_t}(t) = \bbE\left[e^{it^{\top}\left(\Phi AU + \Phi \eps\right)}\right] = \phi_U(A^{\top}\Phi^{\top}t) \  \phi_\eps(\Phi^{\top}t)  = \phi_{\Phi X_s}(t)  \ e^{it^{\top}\Phi b} \,.
$$
Therefore, if $\Phi X_s \overset{\mathscr{L}}{=}\Phi X_t$, $\phi_{\Psi X_t}(t) = \phi_{\Phi X_s}(t)$ for all $t$, which further implies $e^{it^{\top}\Phi b} = 1$ for all $t$, which implies $\Phi b = 0$. 
This completes the proof.

\section{Proof of Auxiliary Lemmas}
\subsection{Proof of Lemma \ref{lem:minimizer}}
\label{sec:proof_lem_minimizer}
The proof of the second part of the above lemma follows directly from the strong convexity of $\cR_n$ with respect to the second coordinate, as the strong convexity assumption yields: 
$$
\cR(v_s, v_t) \ge \cR(v_s, y_t^*(v_s)) + \left\langle v_t - y_t^*(v_s), \partial_t \cR(v_s, y_t^*(v_s))\right\rangle + \frac{\mu_\cR}{2n}\left\|v_t - y_t^*(v_s) \right\|^2\,.
$$
The second term of the RHS of the above equation is $0$ as $\partial_t \cR(v_s, y_t^*(v_s)) = 0$ (as the derivative of a smooth function is $0$ at minima). 
Therefore, changing sides of the terms, we conclude: 
$$
\left\|v_s - y_t^*(v_s)\right\|^2 \le \frac{2 n}{\mu_\cR}\left(\cR(v_s, v_t) - \cR(v_s, y_t^*(v_s))\right) \le \frac{2 n}{\mu_\cR} \cR(v_s, v_t)
$$
where the last inequality follows from the non-negativity of $\cR_n$. This completes the proof of the second part of the lemma. 
\\\\
\noindent
For the first part of the lemma, first note that we have :
$$
\left \langle y_t^*(v_2) - y_t^*(v_1), \partial_t \cR_n(v_1, y_t^*(v_1)) - \partial_t \cR_n(v_2, y_t^*(v_2))\right \rangle  = 0
$$
as $\partial_t \cR_n(v_1, y_t^*(v_1)) = \partial_t \cR_n(v_2, y_t^*(v_2)) = 0$ (derivative is 0 at minima). Adding and subtracting $\partial_t \cR_n(v_1, y_t^*(v_2))$ from the above equation yields: 
\begin{align*}
& \left \langle y_t^*(v_2) - y_t^*(v_1), \partial_t \cR_n(v_1, y_t^*(v_1)) - \partial_t \cR_n(v_1, y_t^*(v_2)) \right. \\ 
& \qquad \qquad \qquad \qquad \qquad \qquad \left. + \partial_t \cR_n(v_1, y_t^*(v_2)) - \partial_t \cR_n(v_2, y_t^*(v_2))\right \rangle  = 0
\end{align*}
Changing sides, we have: 
\begin{align}
& \left \langle y_t^*(v_2) - y_t^*(v_1), \partial_t \cR_n(v_1, y_t^*(v_2)) - \partial_t \cR_n(v_2, y_t^*(v_2))\right \rangle \notag \\
& \qquad \qquad \qquad \ge \left \langle y_t^*(v_2) - y_t^*(v_1), \partial_t \cR_n(v_1, y_t^*(v_2)) - \partial_t \cR_n(v_1, y_t^*(v_1)) \right \rangle \notag \\
\label{eq:lower_bound} & \qquad \qquad \qquad \ge \frac{\mu_\cR}{2 n}\left\| y_t^*(v_2) - y_t^*(v_1)\right\|^2 \,.
\end{align}
On the other hand, a simple application of the Cauchy-Schwarz inequality yields: 
\begin{align}
& \left \langle y_t^*(v_2) - y_t^*(v_1), \partial_t \cR_n(v_1, y_t^*(v_2)) - \partial_t \cR_n(v_2, y_t^*(v_2))\right \rangle \notag \\
& \qquad \qquad \qquad \le \left\| y_t^*(v_2) - y_t^*(v_1)\right\| \left\|\partial_t \cR_n(v_1, y_t^*(v_2)) - \partial_t \cR_n(v_2, y_t^*(v_2)) \right \| \notag \\
\label{eq:upper_bound} & \qquad \qquad \qquad \le \frac{L_\cR}{2n} \left\| y_t^*(v_2) - y_t^*(v_1)\right\|\left\|v_1 - v_2\right\| \,.
\end{align}
Combining the bounds of Equation~\eqref{eq:lower_bound} and \eqref{eq:upper_bound}, we have: 
$$
\left\| y_t^*(v_2) - y_t^*(v_1)\right\| \le \frac{L_\cR }{\mu_\cR}\left\|v_1 - v_2\right\| \,,
$$
which completes the proof.

\subsection{Proof of Lemma \ref{lem:sc_L_l2_bound}}
\label{sec:proof_lemma_quad_lower_bound}
The proof follows directly from the following properties of the $\cL$: 
\begin{enumerate}
    \item $\cL(f_0(X_s), f_0(X_s)) = 0$. 
    \item $\partial_1\cL(f_0(X_s), f_0(X_s)) = \partial_2\cL(f_0(X_s), f_0(X_s)) = 0$
    \item $\cL$ is strongly convex. 
\end{enumerate}
From strong convexity of $\cL$ we have: 
\begin{align*}
\cL\left(\hat f(X_s), f_0(X_s)\right) & \ge \cL\left(f_0(X_s), f_0(X_s)\right) \\
& \qquad + \left\langle \hat f(X_s) - f_0(X_s), \partial_1 \cL\left(f_0(X_s), f_0(X_s)\right)\right \rangle  \\
& \qquad \qquad \qquad \qquad + \frac{\mu_\cL}{2} \left\|\hat f(X_s) - f_0(X_s)\right\|^2
\end{align*}
The first and second term on the RHS will be 0 by the first and second properties of $\cL$ mentioned above. Therefore, we have: 
$$
\cL\left(\hat f(X_s), f_0(X_s)\right) \ge  \frac{\mu_\cL}{2} \left\|\hat f(X_s) - f_0(X_s)\right\|^2
$$
which completes the proof.

\section{Similarity Kernel-based Regularizer}
A similarity kernel-based regularizer $\cR$ is defined as: 
$$
\cR(f, g) = \bbE_{\substack{X \sim P \\ X' \sim Q}}\left[(f(X) - g(X'))^2K(X, X')\right]
$$
where $K$ is the kernel of similarity. 
In particular, if an $x$ from the source domain is \emph{similar} to an $x'$ in the target domain in the sense that $f_0(x) \approx f_0(x')$, then we expect the value of $K(x, x')$ to be large. 
In this section, we show that under some mild regularity condition on $K$, this regularizer satisfies Assumption~\ref{asm:smoothness_penalty}.  
\begin{assumption}[Assumption on kernel]
\label{asm:kernel}
Define $K_Q(x') = \bbE_{x \sim P}[K(x, x')]$ and $K_{\max} = \max_{x, x'} K(x, x')$. Assume that $K_{\max} < \infty$ and
$$
\inf_{h} \frac{\|h\sqrt{K_Q}\|_Q}{\|h\|_Q} \ge \phi > 0 \,. 
$$
\end{assumption}
{\bf Gateaux derivatives of $\cR(f, g)$: }The first order Gateaux derivative of $\cR$ in the direction of a function $h$ is defined as: 
\begin{align*}
    \partial_2 \cR((f, g); h) & = \lim_{t \downarrow 0} \frac{\cR(f, g+th) - \cR(f, g)}{t} \\
    & = 2\bbE_{\substack{X \sim P \\ X' \sim Q}}\left[(g(X') - f(X))h(X') K(X, X')\right]
\end{align*}
Similarly, the second order Gateaux derivative at direction $(h_1, h_2)$ is defined as: 
\begin{align*}
    \partial_2^2 \cR((f, g); h_1, h_2) & = \lim_{t \downarrow 0} \frac{\partial_2 \cR((f, g+th_2); h_1) - \partial_2 \cR((f, g); h_1)}{t} \\
    & = 2\bbE_{\substack{X \sim P \\ X' \sim Q}}\left[h_1(X')h_2(X') K_Q( X')\right]
\end{align*}
where $K_Q(X') = \bbE_{X \sim P}[K(X, X')]$. Therefore, the strong convexity follows from Assumption \ref{asm:kernel}.
\\

\noindent
We next show that $\cR$ also satisfies the second condition of Assumption \ref{asm:pop_reg}. Towards that direction: 
\allowdisplaybreaks
\begin{align*}
    & \partial_2 R((f_2, M(f_2)); (M(f_1) - M(f_2))) - \partial_2 R((f_1, M(f_2)); (M(f_1) - M(f_2))) \notag \\ \notag \\
    & \qquad \qquad =  2\bbE_{\substack{X \sim P \notag \\ X' \sim Q}}\left[(f_1(X) - f_2(X))(M(f_1)(X') - M(f_2)(X')) K(X, X')\right] \notag \\
     \label{eq:ub_pop} & \qquad \qquad \le K_{\max} \left\|f_1 - f_2\right\|_P \left\|M(f_1) - M(f_2)\right\|_Q \,.
\end{align*}
This concludes that the similarity kernel-based population regularizer $\cR$ satisfies Assumption \ref{asm:pop_reg} under Assumption \ref{asm:kernel} on the kernel function. 

\section{Population and Sample Version of the Regularizer}
\label{sec:pop_sample_reg}
In this section, we show that under a fairly general condition, if $\cR_n$ (the sample version of the regularization) satisfies Assumptions~\ref{asm:regression-function-smooth} and~\ref{asm:smoothness_penalty} and $\cR$ is the asymptotic limit of $\cR_n$, i.e., $\cR_n \overset{a.s.}{\to} \cR$ as $n_s, n_t \to \infty$, then $\cR$ will satisfy Assumption \ref{asm:regression-function-smooth-pop} and \ref{asm:pop_reg}. Towards that if $\cR_n$ satisfies Assumption \ref{asm:regression-function-smooth} for all $n$, then taking the limit $n \to \infty$, it is immediate that $\cR$ satisfies Assumption \ref{asm:regression-function-smooth-pop}.
\\\\
\noindent
For the other assumption, suppose $\cR_n$ satisfies the first part of Assumption~\ref{asm:smoothness_penalty}, i.e., it is strongly convex with respect to its second coordinates (the coordinates corresponding to the target samples), then again, simply taking the limit $n \to \infty$, we conclude that $\cR$ is also strongly convex with $\mu_\cR = \liminf_{n \to \infty} \mu_{\cR_n}$ (as long as $\mu_\cR > 0$). By similar argument, the second part of Assumption~\ref{asm:pop_reg} is also satisfied if $\cR_n$ satisfies the strong smoothness assumption and $\cL_{\cR_n}$ does not diverge to infinity.

\section{Bound for Non-Covariate Shift}
\label{sec:non_cov_shift}
In this section, we extend the result of Theorem \ref{thm:thm_pop} to the setup when the mean function $f_0$ is different on source and target domain. More precisely, we assume the following data generative process: 
\begin{equation}
\label{eq:dgp_general}
y_s = f_s(x_s) + \eps_s, \ \ y_t = f_t(x_t) + \eps_t \,.
\end{equation}
The following theorem extends the bounds obtained in Theorem \ref{thm:thm_pop} for the estimator obtained via~\eqref{eq:hat_f_reg}: 
\begin{theorem}
\label{th:non_cov_shift}
Suppose we observe $(Y_1, X_1), \dots, (X_n, Y_n)$ from the source domain and $\tilde X_1, \dots, \tilde X_n$ from the target domain. The estimator $\tilde f$ obtained via Equation~\eqref{eq:hat_f_reg} satisfied the following generalization error bound on the target domain: 
\begin{align*}
\bbE_Q\left[\cL(\tilde f(x), f_t(x))\right] & \le C_1 \left[ \bbE_P\left[\cL(\tilde f(x), f_s(x))\right] + \lambda \cR(\tilde f)\right] \\
& \qquad \qquad + C_2 \min\left\{\cR(f_t)  + \left\|f_s - f_t\right\|^2_P, \cR(f_s) +  \left\|f_s - f_t\right\|^2_Q\right\} \,,
\end{align*}
for some constants $C_1, C_2$ mentioned explicitly in the proof. 
\end{theorem}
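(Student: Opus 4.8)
The plan is to follow the proof of Theorem~\ref{thm:thm_pop} almost verbatim; the one genuinely new decision is whether to route the triangle inequality through $f_s$ or through $f_t$, and the two choices are precisely what produce the two terms inside the $\min$.

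First, exactly as in \eqref{eq:ind_loss_upper} and \eqref{eq:ind_loss_lower}, strong smoothness and strong convexity of $\cL$ (Assumption~\ref{asm:gen_loss}) reduce the claim to a bound on $\|\tilde f - f_t\|_Q^2$: one has $\bbE_Q[\cL(\tilde f(x), f_t(x))] \le \tfrac{L_\cL}{2}\|\tilde f - f_t\|_Q^2$ on the left and $\bbE_P[\cL(\tilde f(x), f_s(x))] \ge \tfrac{\mu_\cL}{2}\|\tilde f - f_s\|_P^2$ on the right. I keep the operator $M(f) = \argmin_g \cR(f,g)$ and its two properties established in that proof, which involve only $\cR$ and Assumption~\ref{asm:pop_reg} and hence transfer unchanged: $\|f - M(f)\|_Q^2 \le \tfrac{2}{\mu_\cR}\cR(f,f)$ for every $f$, and $M$ is Lipschitz with $\|M(f_1) - M(f_2)\|_Q \le \tfrac{L_\cR}{\mu_\cR}\|f_1 - f_2\|_P$. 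Nothing below uses optimality of $\tilde f$, so the final inequality in fact holds with an arbitrary $f$ in place of $\tilde f$.

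For the first branch I anchor at $f_t$: $\|\tilde f - f_t\|_Q \le \|\tilde f - M(\tilde f)\|_Q + \|M(\tilde f) - M(f_t)\|_Q + \|M(f_t) - f_t\|_Q$. The two outer summands are $\le \sqrt{2\cR(f_t)/\mu_\cR}$ and $\le \sqrt{2\cR(\tilde f)/\mu_\cR}$ by the first property; the middle one is $\le \tfrac{L_\cR}{\mu_\cR}\|\tilde f - f_t\|_P \le \tfrac{L_\cR}{\mu_\cR}(\|\tilde f - f_s\|_P + \|f_s - f_t\|_P)$, the extra term $\|f_s - f_t\|_P$ being the price of non-covariate shift, since the training fit is to $f_s$ and not $f_t$. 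Squaring the resulting four-term estimate with $(a_1+a_2+a_3+a_4)^2 \le 4\sum_i a_i^2$ controls $\|\tilde f - f_t\|_Q^2$ by a fixed multiple of $\cR(\tilde f) + \|\tilde f - f_s\|_P^2 + \cR(f_t) + \|f_s - f_t\|_P^2$. For the second branch I anchor at $f_s$ first: $\|\tilde f - f_t\|_Q \le \|\tilde f - f_s\|_Q + \|f_s - f_t\|_Q$, and then $\|\tilde f - f_s\|_Q \le \|\tilde f - M(\tilde f)\|_Q + \|M(\tilde f) - M(f_s)\|_Q + \|M(f_s) - f_s\|_Q$; now the Lipschitz step gives $\|M(\tilde f) - M(f_s)\|_Q \le \tfrac{L_\cR}{\mu_\cR}\|\tilde f - f_s\|_P$ with no correction, at the cost of $\|f_s - f_t\|_Q$ appearing instead of $\|f_s - f_t\|_P$. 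Squaring as before bounds $\|\tilde f - f_t\|_Q^2$ by a fixed multiple of $\cR(\tilde f) + \|\tilde f - f_s\|_P^2 + \cR(f_s) + \|f_s - f_t\|_Q^2$.

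Taking the smaller of the two branches, multiplying by $\tfrac{L_\cL}{2}$ to pass from $\|\tilde f - f_t\|_Q^2$ to $\bbE_Q[\cL(\tilde f(x), f_t(x))]$, replacing $\|\tilde f - f_s\|_P^2$ by $\tfrac{2}{\mu_\cL}\bbE_P[\cL(\tilde f(x), f_s(x))]$, and inserting a factor $1/\lambda$ so $\cR(\tilde f)$ becomes $\lambda\cR(\tilde f)$, yields the stated bound with $C_1, C_2$ explicit functions of $\mu_\cL, L_\cL, \mu_\cR, L_\cR, \lambda$. The main obstacle — really the only thing beyond a copy of the Theorem~\ref{thm:thm_pop} argument — is this source/target asymmetry: the objective measures fit to $f_s$ under $P$ while the target risk measures fit to $f_t$ under $Q$, so one of $\|f_s-f_t\|_P$, $\|f_s-f_t\|_Q$ is unavoidable, and keeping the $\min$ simply records whichever path is cheaper; tracking the factors of $4$ from the four-term triangle inequalities then gives the constants.
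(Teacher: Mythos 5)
Your proposal is correct and follows essentially the same route as the paper: reduce to $\|\tilde f - f_t\|_Q^2$ via strong convexity/smoothness of $\cL$, reuse the two properties of $M$ from the proof of Theorem~\ref{thm:thm_pop}, and run two four-term triangle-inequality decompositions (one anchored at $f_t$, one at $f_s$) whose cheaper branch gives the $\min$. The only cosmetic difference is in the first branch, where you bound $\|M(\tilde f)-M(f_t)\|_Q$ by $\tfrac{L_\cR}{\mu_\cR}\|\tilde f - f_t\|_P$ and then split in $L^2(P)$, whereas the paper inserts $M(f_s)$ as an intermediate point and applies the Lipschitz property of $M$ twice; both yield the same four quantities.
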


\begin{proof}
The proof is quite similar to the proof of Theorem \ref{thm:thm_pop}, hence we will only highlight here the key difference for the sake of brevity. From the proof of Theorem \ref{thm:thm_pop} we have: 
\begin{align}
   \label{eq:ind_loss_upper_new} \bbE_Q\left[\cL(\tilde f(x), f_t(x))\right]  & \le \frac{L_\cL}{2}\left\|\tilde f(x) - f_t(x)\right\|_Q^2 \,, \\
    \label{eq:ind_loss_lower_new} \bbE_P\left[\cL(\tilde f(x), f_s(x))\right] & \ge \frac{\mu_\cL}{2}\left\|\tilde f(x) - f_s(x)\right\|_P^2 \\
    \left\| f - M(f)\right\|_Q^2 & \le \frac{2}{\mu_L}\left[\cR(f, f) - \cR(f, M(f))\right] \le \frac{2}{\mu_L}\cR(f, f) \,, \\ 
    \left\|M(f_1) - M(f_2)\right\|_Q & \le \frac{L_\cR}{\mu_\cR} \left\|f_1 - f_2\right\|_P \,.
\end{align}

An application of triangle inequality yields: 
\begin{align}
    \left\|\tilde f - f_t\right\|_Q^2 & \le  8\left[\left\|\tilde f - M(\tilde f)\right\|_Q^2 + \left\|M(\tilde f) - M(f_s)\right\|_Q^2 + \left\|M(f_s) - M(f_t) \right\|_Q^2 + \|M(f_t) - f_t\|_Q^2\right] \notag \\
    & \le \frac{16}{\mu_\cR}\left(\cR(f_t) + \cR(\tilde f)\right) + \frac{8L_\cR}{\mu_\cR}\left(\left\|\tilde f - f_s\right\|^2_P + \left\|f_s - f_t\right\|^2_P\right) \notag \\
    & := \bar C_0 \left[\left\|\tilde f - f_s\right\|^2_P + \lambda \cR(\tilde f)\right] + \bar C_2 \cR(f_t) + \bar C_3  \left\|f_s - f_t\right\|^2_P \notag \\
    \label{eq:bound_p} & \le \bar C_1 \left[ \bbE_P\left[\cL(\tilde f(x), f_s(x))\right] + \lambda \cR(\tilde f)\right] + \bar C_2 \cR(f_t)  + \bar C_3  \left\|f_s - f_t\right\|^2_P
\end{align}
where the first term on the right hand side is the minimum training error (population version, i.e., in presence of infinite sample) and the second term quantifies the smoothness of $f_0$ in terms of the regularizer $R$. The last inequality follows from the strong convexity of the loss function \eqref{eq:ind_loss_lower}. 
Another version of telescoping sum yields: 
\begin{align}
    \left\|\tilde f - f_t\right\|_Q^2 & \le  8\left[\left\|\tilde f - M(\tilde f)\right\|_Q^2 + \left\|M(\tilde f) - M(f_s)\right\|_Q^2 + \left\|M(f_s) - f_s \right\|_Q^2 + \|f_s - f_t\|_Q^2\right] \notag \\
    & \le \frac{16}{\mu_\cR}\left(\cR(f_s) + \cR(\tilde f)\right) + \frac{8L_\cR}{\mu_\cR}\left\|\tilde f - f_s\right\|^2_P + 8\left\|f_s - f_t\right\|^2_Q \notag \\
    & := \tilde C_0 \left[\left\|\tilde f - f_s\right\|^2_P + \lambda \cR(\tilde f)\right] + \tilde C_2 \cR(f_s) + \tilde C_3  \left\|f_s - f_t\right\|^2_Q \notag \\
     \label{eq:bound_q} & \le \tilde C_1 \left[ \bbE_P\left[\cL(\tilde f(x), f_s(x))\right] + \lambda \cR(\tilde f)\right] + \tilde C_2 \cR(f_s)  + \tilde C_3  \left\|f_s - f_t\right\|^2_Q
\end{align}
Therefore, combining Equations~\eqref{eq:bound_p} and \eqref{eq:bound_q} yields the result of the theorem. 
\end{proof}

\section{Experimental Details}
\label{supp:sec:experiments}

In Table \ref{table:if-for-da-pc} we compare prediction consistency \cite{yurochkin2020training,yurochkin2021sensei} of the methods compared in Table \ref{table:if-for-da} of the main text to verify that they also achieve individual fairness as intended.

\begin{table*}[h]
    \caption{
Comparison of prediction consistency in the experiment corresponding to Table \ref{table:if-for-da}.
    }
    \label{table:if-for-da-pc}
    \centering
    {\small
    \begin{tabular}{lcc}
    \toprule
              &    Bios      & Toxicity \\
    \midrule
    Baseline            & 94.2\%$\pm$0.1\% & 62.1\%$\pm$1.4\% \\
    GLIF          & \textbf{98.8}\%$\pm$0.2\% & \textbf{84.4}\%$\pm$1.3\% \\
    SenSeI              & 97.7\%$\pm$0.1\% & 77.3\%$\pm$4.3\% \\
    SenSR               & 97.6\%$\pm$0.1\% & 72.9\%$\pm$4.4\% \\
    CLP                 & 97.4\%$\pm$0.1\% & 76.3\%$\pm$4.8\% \\
    \bottomrule
    \end{tabular}
    }
\end{table*}

We summarize some additional details regarding the implementation of domain adaptation methods in the experiments in Section~\ref{sec:da-for-if-experiments}.

\begin{itemize}
    \item Since the target domains are labeled (they consist of labeled samples from the train data), we also add a loss term to the objective corresponding to the target domain performance when training the domain adaptation methods. 
    Recall that the main mechanisms for achieving individual fairness are the representation alignment regularizers, thus adding loss in the target domain is simply a way to utilize the available labels to improve performance.
    \item For DANN, we use a ReLU-activated two-layer base model with 2000 hidden neurons and 768 output neurons. Further, we use a ReLU-activated two-layer base model with 100 hidden neurons and one logistically activated neuron as the discriminator.
    As the prediction head, we use a ReLU-activated two-layer model with 2000 hidden neurons.
    \item For VADA, we use the same models as for DANN, and the primary difference is the additional virtual adversarial training (VAT) loss.
    \item For WDA, we replaced the Wasserstein distance utilized by Shen~\textit{et al.}~\cite{shen2018wasserstein} with the Sinkhorn divergence~\cite{genevay2018learning}. 
    The Sinkhorn divergence is a computationally more efficient analogue of the Wasserstein distance regularizer. 
    We used the Geomloss package~\cite{feydy2019interpolating} in our code.
\end{itemize}

\section{Background on Domain Adaptation and Algorithmic Fairness}
\label{sec:domain-adaptation-background}

Domain adaptation generally refers to the problem of semi-supervised learning under distribution shift. More precisely, in the semi-supervised setting the learner is given a labeled dataset $\{(X_i,Y_i)\}_{i=1}^n$ and an unlabeled dataset $\{X_i\}_{i=n+1}^m$. In domain adaptation, we typically assume the labeled samples and unlabeled samples are drawn from a source $P$ and target distribution $Q$ that are similar but non-identical. The goal of the learner is to find a prediction rule $f:\cX\to\cY$ such that $\Ex_Q\big[\ell(f(X),Y)\big]$ is small. 
This goal is impossible without additional assumptions restricting the differences between $P$ and $Q$. In light of the available data, a natural assumption is covariate shift: $\Ex_P\big[Y\mid X=x\big] = \Ex_Q\big[Y\mid X=x\big]$. The standard approach to this problem is importance weighing \cite{sugiyama2007covariate}. It is based on the observation that 
\begin{equation}
\Ex_Q\big[\ell(f(X),Y))\big] = \Ex_P\big[w(X)\ell(f(X),Y)\big]],
\label{eq:importance-weighing-covariate-shift}
\end{equation}
where $w(x)\triangleq\frac{dQ_X}{dP_X}(x)$ is the likelihood ratio between the marginal distribution of inputs in the target and that in the source domains. It is possible to estimate $w$ from the inputs in the labeled and unlabeled datasets~\cite{sugiyama2012density}, which allows the learner to estimate the right side of \eqref{eq:importance-weighing-covariate-shift}.

It is known that many instances of algorithmic bias are caused by distribution shift between the training data and real-world data encountered by the model during deployment. Broadly speaking, research has identified two types of algorithmic bias caused by distributional shifts \cite{obermeyer2021algorithmic}:
\begin{enumerate}
\item the model is trained to predict the wrong target;
\item the model is trained to predict the correct target, but its predictions are inaccurate for demographic groups that are underrepresented in the training data.
\end{enumerate}
In statistical terms, the first type of algorithmic bias is caused by \emph{posterior drift} between the training and real-world data. This leads to a mismatch between the model's predictions and the correct values of the target in the real world. The second type of algorithmic biases arises when ML models are trained or evaluated in non-diverse training data, so the models perform poorly on underserved groups. In statistical terms, this type of algorithmic bias is caused by \emph{covariate shift} between the training and real-world data.

Several prior works study the effects of enforcing algorithmic fairness under distribution shift. Blum~\textit{et al.}~\cite{blum2019recovering} consider the effects of enforcing demographic parity and equalized odds under two forms of distribution shift they call under-representation bias and labeling bias. Maity~\textit{et al.}~\cite{maity2021does} consider the effects of enforcing group fairness in a domain generalization setting when there is subpopulation shift between the source and target domains. Another line of work considers how fairness guarantees (instead of performance guarantees) transfer under distribution shift \cite{schumann2019transfer,schrouff2022maintaining,chen2022fairness}. Singh~\textit{et al.}~\cite{singh2021fairness} and Rezaei~\textit{et al.}~\cite{rezaei2021robust} consider both transferability of performance and fairness guarantees under covariate shift.

\end{document}